\newcommand{\enc}[1]{[\![#1]\!]}
\newcommand{\comment}[1]{#1}
\newtheorem{thm}{Theorem}
\newtheorem{cor}{Corollary}
\newtheorem{lem}{Lemma}
\newtheorem{prop}{Proposition}
\theoremstyle{definition}
\newtheorem{defn}{Definition}
\theoremstyle{remark}
\theoremstyle{definition}
\newtheorem{exm}{Example}
\title{A Theory of Formalisms for Representing Knowledge\comment{\\(Extended Version)}
 }
\author{
    Heng Zhang\textsuperscript{\rm 1}\thanks{Corresponding author.},
    Guifei Jiang\textsuperscript{\rm 2},
    Donghui Quan\textsuperscript{\rm 1}
}
\begin{document}

\maketitle

\begin{abstract}
There has been a longstanding dispute over which formalism is the best for representing knowledge in AI. The well-known “declarative vs. procedural controversy” is concerned with the choice of utilizing declarations or procedures as the primary mode of knowledge representation. The ongoing debate between symbolic AI and connectionist AI also revolves around the question of whether knowledge should be represented implicitly (e.g., as parametric knowledge in deep learning and large language models) or explicitly (e.g., as logical theories in traditional knowledge representation and reasoning). To address these issues, we propose a general framework to capture various knowledge representation formalisms in which we are interested. Within the framework, we find a family of universal knowledge representation formalisms, and prove that all universal formalisms are recursively isomorphic. Moreover, we show that all pairwise intertranslatable formalisms that admit the padding property are also recursively isomorphic. These imply that, up to an offline compilation, all universal (or natural and equally expressive) representation formalisms are in fact the same, which thus provides a partial answer to the aforementioned dispute.
\end{abstract}

%
 \begin{links}
     \link{Extended version}{https://arxiv.org/abs/2412.11855}
 \end{links}

\section{Introduction}

Knowledge is extensively acknowledged as a cornerstone of intelligence \cite{McCarthy1981,DelgrandeG0TW24}, playing a crucial role in intelligent systems. How to effectively represent, acquire, utilize and evolve knowledge is undoubtedly one of the most critical parts of realizing artificial general intelligence (AGI). Knowledge representation serves as the foundation and starting point for all these tasks. In traditional knowledge representation and reasoning (KR), representations of knowledge are often regarded as ``explicit, symbolic, declarative representations of information"~\cite{DelgrandeG0TW24}. However, in this work, we will consider more general forms of knowledge representation.

Over the past seven decades, researchers have devoted substantial efforts to developing various knowledge representation formalisms. An incomplete list includes: monotonic logical systems such as Prolog~\cite{EmdenK76} and description logics~\cite{BHLS17}; non-monotonic logics such as circumscription~\cite{McCarthy80} and default logic~\cite{Reiter80}; graph-based representations such as Bayesian networks~\cite{Pearl85} and semantic network~\cite{Sowa91}, and parametrized models such as recurrent neural networks~\cite{RHW86} and transformers~\cite{VaswaniSPUJGKP17}.
It is noteworthy that machine learning and knowledge representation are inherently intertwined; all learning algorithms are actually based on some formalisms for representing knowledge.

The quest for the best formalism of knowledge representation has sparked a longstanding dispute
. A prime example is the ``declarative vs. procedural" controversy, which centers on choosing between declarative statements and procedures as the primary means of representing knowledge. Similarly, the ongoing debate between symbolic AI and connectionist AI revolves around the question of whether knowledge should be represented explicitly or implicitly. In general, knowledge representation formalisms in machine learning, such as convolutional neural networks in deep learning and transformers in large language models, are  implicit, while all logical formalisms in traditional KR are explicit.

In this work, we will undertake a systematic exploration of disputed issues, particularly the search for the optimal knowledge representation formalism. A general framework is needed to systematically evaluate varied formalisms. While extensive philosophical deliberations on fundamentals of knowledge representation exist, including the physical symbol system hypothesis~\cite{NewellS76} and the knowledge representation hypothesis~\cite{Smith82}, they primarily remain within the realm of theoretical consensus-building. Departing from this path, our goal is to obtain rigorous conclusions from a broad and inclusive framework through meticulous mathematical demonstrations, thereby deepening the understanding of the nature of representation.

The main contributions of this work are threefold. Firstly, we propose a general framework to capture all the knowledge representation formalisms in which we are interested, and  propose a novel definitions for universal formalisms. Secondly, we find a family of universal formalisms; based on them, we then prove that all possible universal formalisms are recursively isomorphic. Thirdly, we show that, under a natural condition, all subrecursive formalisms that can be translated into each other are recursively isomorphic. These results show us that, if an offline compilation is allowed, almost all the natural representation formalisms with the same expressive power can be regarded as the same, which thus provides us a partial answer to the aforementioned dispute.

\section{Conventions and Notations}
Suppose $A, B$ and $C$ are sets. By $p:A\rightharpoonup B$ we denote that $p$ is a {\em partial function} (or {\em mapping}) from $A$ to $B$. We call $p$ a partial function {\em on} $A$ if $A=B$. Let $\textit{dom}(p)$ and $\textit{ran}(p)$ denote the {\em domain} and {\em range} of $p$, respectively. In addition, we call $p$ a {\em function} from $A$ to $B$, written $p:A\rightarrow B$, if $p$ is {\em total}. Given $X\subseteq A$, by $p|_X$ we denote the {\em restriction} of $p$ to $X$. A partial function $q$ is called  an {\em extension} of $p$, or equivalently, $q$ {\em extends} $p$, if $p$ is a restriction of $q$. Given functions $p:A\rightarrow B$ and $q:B\rightarrow C$, by $p\circ q$ we denote the {\em composition} of $p$ and $q$.
All other notions of functions such as injectiveness, surjectiveness and bijectiveness are standard. 

The reader is assumed familiar with logic. A {\em signature} is a set that consists of {\em predicate} and {\em function symbols}, each associated with a nonnegative integer, called its {\em arity}. {\em Constants} are nullary function symbols. Let $\sigma$ be a signature. By {\em $\sigma$-atoms} and {\em $\sigma$-sentences} (or {\em sentences of $\sigma$}) we denote atoms and sentences, respectively, built from $\sigma$ and standard logical connectives and quantifiers as usual. Let FO and SO denote the classes of first-order and second-order sentences, respectively. A {\em structure of $\sigma$} (or simply, {\em $\sigma$-structure}) $\mathcal{A}$ is armed with a nonempty {\em domain} $A$, maps each predicate symbol $P\in\sigma$ to a relation $P^{\mathcal{A}}$ on $A$, and maps each function symbol $f\in\sigma$ to a function $f^{\mathcal{A}}$ on $A$, both are of the same arity. A {\em UNA-structure} of $\sigma$ is a $\sigma$-structure $\mathcal{A}$ that satisfies the unique name assumption, i.e., $c^{\mathcal{A}}\ne d^{\mathcal{A}}$ for every pair of distinct constants $c,d$ in $\sigma$. Given $\upsilon\subseteq\sigma$, let $\mathcal{A}|_{\upsilon}$ denote the {\em restriction} of $\mathcal{A}$ to $\upsilon$. We call $\mathcal{A}$ a {\em $\sigma$-expansion} of $\mathcal{B}$ if $\mathcal{B}=\mathcal{A}|_{\upsilon}$ for some $\upsilon\subseteq\sigma$. Let $\phi$ be a $\sigma$-sentence. We write $\mathcal{A}\models\phi$ if $\mathcal{A}$ is a {\em model} of $\phi$. Given a class $\mathbb{C}$ of $\sigma$-structures, we write $\mathbb{C}\models\phi$ if $\mathcal{A}\models\phi$ for all $\mathcal{A}\in\mathbb{C}$. Given a set $\Sigma$ of sentences and a sentence $\psi$, we write $\Sigma\vDash\phi$ and $\psi\vDash\phi$ if $\phi$ is a {\em logical implication} of $\Sigma$ and $\psi$, respectively. 

Every {\em Turing machine} $M$ is armed with a two-way infinite tape, a reading head, a finite set $Q$ of states and a fixed symbol set $\{0,1,B\}$. There is exactly one {\em starting state} and at least one {\em halting state} in $Q$. Every halting state is either an {\bf yes} state or a {\bf no} state, but cannot be both. Both the input and output are strings in $\{0,1\}^\ast$, stored in the tape, starting from the position of the reading head and ending with $B$.  Let $L\subseteq\{0,1\}^\ast$. We say $M$ {\em accepts} $L$ if, for every $\pi\in\{0,1\}^\ast$, $M$ {\em accepts} $\pi$ (i.e., $M$ on input $\pi$ halts at an {\bf yes} state) if $\pi\in L$, and never halts otherwise; and $M$ {\em decides} $L$ if, for every $\pi\in\{0,1\}^\ast$, $M$ accepts $\pi$ if $\pi\in L$, and rejects $\pi$ otherwise.  
We say $L$ is {\em recursively enumerable} (respectively, {\em recursive}) if it is accepted (respectively, decided) by some Turing machine. Moreover, Turing machines can also be used to compute functions. 
We say a partial function $p$ from $\{0,1\}^\ast$ to $\{0,1\}^\ast$ is {\em computed} by $M$ if, given $\pi\in\{0,1\}^\ast$ as input, $M$ halts with the output $\omega$ iff $p$ is defined on $\pi$ and $p(\pi)=\omega$. We say $p$ is {\em partial recursive} if it is computed by some Turing machine, and $p$ is {\em recursive} if it is partial recursive and $\textit{dom}(p)$ is recursive.

To simplify the presentation, we will fix $\enc{\cdot}$ as an injective mapping that maps every finite object to a string in $\{0,1\}^\ast$. For example, given a Turing machine $M$, by $\enc{M}$ we denote the encoding of $M$ in $\{0,1\}^\ast$. Moreover, we require that both $\enc{\cdot}$ and its inverse can be effectively obtained.

\section{Framework}
The major goal of this work is to carry out a careful comparison between different formalisms for representing knowledge. To this end, we have to propose a general framework that captures all the formalisms in which we are interested.

To build the desired framework, one immediate thought might be to define a family of abstract logical formalisms, similar to abstract logical systems proposed for establishing Lindstr\"{o}m's theorem~\cite{Lindstrom69}. But we do not pursue this approach in this work. The main reasons are as follows. Firstly, the framework established in this way is not general enough. It is important to note that logic is not the only method for representing knowledge. Secondly, from a user's perspective, the internal logical semantics of a representation formalism are actually not important. Users are primarily concerned with the outputs generated from given inputs. This aligns with a behaviorist perspective.

Regarding the primary computational task, we will focus on knowledge reasoning. While there are certainly other important tasks, such as knowledge acquisition (learning) and maintenance, knowledge reasoning typically runs online, with its efficiency directly determining the performance of the underlying system. In contrast, knowledge acquisition and maintenance can in general be performed offline.


We aim to go beyond the traditional reasoning problem to tackle a more general computational problem, known as {\em query answering} (QA). The problem of QA has been extensively studied in databases, see, e.g.,~\cite{FaginKMP05} and was later introduced into KR to implement data-intensive knowledge reasoning, see, e.g.,~\cite{CalvaneseGLLR07}.

The problem of QA in KR is defined as follows:
\begin{quote}
Given a database $D$, a knowledge base $K$ and a query $\phi$, determine whether $\phi$ is inferable from $D$ and $K$. 
\end{quote}
Intuitively, $D$ stores the {\em observed facts}, $\phi$ describes the {\em question} that the user want to ask, and $K$ represents the {\em knowledge} needed to answer the questions. It should be noted that if $D$ is empty, QA degenerates into the {\em traditional reasoning problem}; if $\phi$ is restricted to a proposition symbol, QA simplifies to both the {\em query evaluation problem} in databases and the {\em classification problem} in machine learning.
 
Following the behaviorist perspective, a notion of {\em abstract knowledge base} can then be defined as {\em the class of database-query pairs $(D,\phi)$ such that $\phi$ is inferable from $D$ and the underlying knowledge base}. To define this formally, we need to establish what constitutes valid databases and queries.

{\medskip\noindent\bf Databases and Queries.  }
We assume $\Delta$ to be a countably infinite set, consisting of all the constants used in databases and queries. Following the tradition in databases, both the closed-world assumption (CWA) and the open-world assumption (OWA) can be made~\cite{AbiteboulHV95}. Therefore, each predicate symbol is either an OWA-predicate symbol or a CWA-predicate symbol, but not both.

%

A {\em database signature} is a signature $\sigma\supseteq\Delta$ that involves no function symbols of arities greater than $0$. A {\em query signature} is a signature $\upsilon\supseteq\Delta$, containing no CWA-predicate symbol. Given any database signature $\sigma$,  let $\textit{Fact}(\sigma)$ denote the set of all $\sigma$-atoms that involve no variables and equality.

\begin{defn}
Let $\sigma$ be a database signature. A {\em $\sigma$-database} is a partial function $D:\textit{Fact}(\sigma)\rightharpoonup\{1,0,-1\}$ such that
\begin{enumerate}
\item (finiteness of observation) there are only a finite number of atoms $\alpha\in\textit{dom}(D)$ such that $D(\alpha)\ge 0$;
\item (completeness of CWA-predicates) $D$ is defined on every atom that involves a CWA-predicate symbol in $\sigma$. 
\end{enumerate} 
\end{defn}

Intuitively, in the above definition, by $D(\alpha)=1$ (respectively, $D(\alpha)=0$) we mean that $\alpha$ was observed to be true (respectively, false), and by $D(\alpha)=-1$ we mean that $\alpha$ has not been observed yet, but its truth is already determined by the current observation and a fixed set of rules under CWA.  

Every {\em observed fact} of $D$ is an atom $\alpha\in\textit{dom}(D)$ such that $D(\alpha)\ge 0$. Let $\textit{DC}(D)$ denote the set of constants each of which appears in at least one observed fact of $D$. Moreover, $D$ is said to be {\em positive} if there is no atom $\alpha\in\textit{dom}(D)$ such that $D(\alpha)=0$, i.e., no negative fact is allowed in $D$.  

We are interested in the following classes of databases: 

\begin{enumerate}
\item $\mathscr{D}^{\sigma}_{\mathrm{All}}$: the class of arbitrary $\sigma$-databases;
\item $\mathscr{D}^{\sigma}_{\mathrm{Pos}}$: the class of positive $\sigma$-databases.
\end{enumerate}

Let $\mathcal{A}$ be a structure of some signature $\upsilon\supseteq\sigma$. We say that $\mathcal{A}$ is a {\em model} of $D$, written $\mathcal{A}\models D$, if we have both
\begin{enumerate}
\item $\mathcal{A}\models\alpha$ for all $\alpha\in\textit{dom}(D)$ with $D(\alpha)=1$, and
\item $\mathcal{A}\not\models\alpha$ for all $\alpha\in\textit{dom}(D)$ with $D(\alpha)=0$.
\end{enumerate}

Next, we define what constitutes a query language:

\begin{defn}\label{defn:query_lang}
Given a query signature $\sigma$, a {\em query language} of $\sigma$ is a recursive class $\mathscr{Q}$ of FO-sentences of $\sigma$ such that 
\begin{enumerate}
\item $\mathscr{Q}$ is closed under conjunctions, that is, if $\phi,\psi\in\mathscr{Q}$, then $\phi\wedge\psi\in\mathscr{Q}$;
\item $\mathscr{Q}$ is closed under constant renaming, that is, if $\tau:\Delta\rightarrow\Delta$ is injective and $\phi\in\mathscr{Q}$, then $\tau(\phi)\in\mathscr{Q}$;
\item $\mathscr{Q}$ contains at least one non-tautological sentence. 
\end{enumerate}
The notation $\tau(\phi)$ above denotes the sentence obtained from $\phi$ by replacing every occurrence of each $c\in\Delta$ with $\tau(c)$.
\end{defn}

\begin{exm}
Both Boolean conjunctive queries (CQs) and unions of conjunctive queries (UCQs, i.e., existential positive FO-sentences) are query languages according the above definition, see, e.g.,~\cite{AbiteboulHV95}. 
\end{exm}

We believe that employing first-order fragments as query languages is a reasonable assumption for the following reasons. According to Lindström's second theorem, first-order logic is the most expressive semi-decidable logic that admits the Löwenheim-Skolem property~\cite{Lindstrom69}. It is also worth to mention that most of the results presented in this paper can be generalized to other semi-decidable logics.


{\medskip\noindent\bf Knowledge Bases. } To simplify the presentation, in the rest of this paper, we {\em fix $\sigma_D$ as a database signature, $\sigma_Q$ a query signature, $\mathscr{D}\in\{\mathscr{D}_{\mathrm{All}}^{\sigma_D},\mathscr{D}_{\mathrm{Pos}}^{\sigma_D}\}$, and $\mathscr{Q}$ a query language of $\sigma_Q$.} Now, let us present a definition for abstract knowledge bases, following the spirit of abstract OMQA-ontology in~\cite{ZhangZY16,ZJ22}.

\begin{defn}\label{defn:kb}
 A {\em knowledge base (KB)} over $(\mathscr{D},\mathscr{Q})$ is a subclass ${K}$ of $\mathscr{D}\times\mathscr{Q}$ satisfying all the following properties:
\begin{enumerate}
\item (Correctness of tautological queries) If $\phi\in\mathscr{Q}$ is a tautology and $D\in\mathscr{D}$, then $(D,\phi)\in K$;
\item (Closure under query implications) If 
$(D,\phi)\in{K}$ and $\psi\in \mathscr{Q}$ and $\phi\vDash\psi$, then $(D,\psi)\in{K}$;
\item (Closure under query conjunctions) If $(D,\phi)\in{K}$ and 
$(D,\psi)\in{K}$, then $(D,\phi\wedge\psi)\in{K}$;
\item (Closure under database extensions) If
$(D,\phi)\in{K}$ and $D_0\in\mathscr{D}$ extends $D$, then $(D_0,\phi)\in{K}$;
\item (Closure under constant renaming) If $(D,\phi)\in{K}$ and $\tau:\Delta\rightarrow\Delta$ is injective, then $(\tau(D),\tau(\phi))\in{K}$.
\end{enumerate}
The notation $\tau(\cdot)$ above is the same as that in Definition~\ref{defn:query_lang}, and it is naturally generalized to databases.  
\end{defn}

In the above, almost all properties are natural and easy to understand. We only give explanations for Properties 4 and 5. Intuitively, Property 4 states that reasoning about open-world information should be monotone, i.e., adding new observed OWA-facts will not change previous answers. Note that, by the definition of database, all CWA-predicates are information complete so that no CWA-fact can be added to a database (to build an extension), which means that Property 4 cannot be applied to any CWA-predicate.

Property 5 rests on the assumption that knowledge should encapsulate general properties applicable to all objects in the underlying domain, rather than including propositions about specific objects. Consequently, the names of objects should not influence the results of QA. 

In Definition~\ref{defn:kb}, only Boolean queries are used, but this is not limiting. By allowing constants in queries, we enable an efficient conversion from arbitrary QA to Boolean QA.


In machine learning, bounded-error algorithms are commonly used. Unfortunately, finding a meaningful method to evaluate error rates for representation-depend tasks like reasoning is extremely difficult, if not impossible~\cite{Lynch74}. This is why our framework does not account for this aspect.

Moreover, an important question arises as to whether the above properties (1-5) indeed capture the class of knowledge bases represented in any formalism in which we are interested. 
First consider the necessity. In a straightforward way, one can verify it case by case. The following is an example.

\begin{exm}
Suppose $\sigma_D$ contains no CWA-predicate symbols. Let $\Sigma$ be a set of sentences (in a monotone logic such as FO or SO) of a signature $\sigma\supseteq(\sigma_D\cup\sigma_Q)\setminus\Delta$. Let
$$K_\Sigma:=\{(D,\phi)\in\mathscr{D}\times\mathscr{Q}:D\cup\Sigma\vDash\phi\}.$$
It is easy to verify that $K_\Sigma$ is a KB over $(\mathscr{D},\mathscr{Q})$. 
\end{exm}

However, as aforementioned, there are a very large number of knowledge representation formalisms to be verified. To avoid this, we address the question from a semantical perspective. Let $D$ be a database recording the current observation in a certain domain. Based on the observation $D$, the knowledge base will produce a certain belief. The latter can be denoted by a class of worlds (structures) in which the belief holds.
We thus have the following definition:
 
\begin{defn}\label{defn:sm}
Let $\sigma\supseteq\sigma_D$ be a signature. A {\em belief mapping} of $(\sigma_D,\sigma)$ is a function $\mathbb{M}$ that maps every $\sigma_D$-database to a class of $\sigma$-structures such that
\begin{enumerate}
\item if $\mathcal{A}\in\mathbb{M}(D)$ then $\mathcal{A}\models D$;
\item if $\tau:\Delta\rightarrow\Delta$ is injective, and $\phi$ an FO-sentence of $\sigma$, then $\mathbb{M}(D)\models\phi$ iff $\mathbb{M}(\tau(D))\models\tau(\phi)$;
 \item if $D_0$ is an extension of $D$ and $\phi$ an FO-sentence of $\sigma$ such that $\mathbb{M}(D)\models\phi$, then $\mathbb{M}(D_0)\models\phi$
\end{enumerate}
for all $\sigma_D$-databases $D$ and $D_0$.
\end{defn}

In the above, the first condition states that the belief produced from the observation must be consistent with the observation; the second asserts that, up to a constant renaming, from the same observation, $\mathbb{M}$ produces the same belief; and the third denotes that adding new observed OWA-facts will not change answers obtained by query answering with $\mathbb{M}$.  

Next, we show how circumscription~\cite{McCarthy80} defines a belief mapping. Some notations are needed. 

The language of circumscription is the same as first-order logic, armed with the minimal model semantics. Let $\upsilon\supseteq\sigma_D$ be a signature. Let $\upsilon_c$ be the set of all CWA-predicate symbols in $\sigma_D$, and $\upsilon_o:=\upsilon\setminus\upsilon_c$. Let $\mathcal{A}$ and $\mathcal{B}$ be $\upsilon$-structures. We write $\mathcal{A}\subseteq_{\upsilon_c}\mathcal{B}$ if $\mathcal{A}$ and $\mathcal{B}$ share the same domain, and
\begin{enumerate}
\item for all $P\in\upsilon_c$, we have $P^{\mathcal{A}}\subseteq P^{\mathcal{B}}$, and
\item $\mathcal{A}|_{\upsilon_o}=\mathcal{B}|_{\upsilon_o}$, i.e., OWA-parts of $\mathcal{A}$ and $\mathcal{B}$ are the same.
\end{enumerate}

Furthermore, let $\Sigma$ be a set of FO-sentences of $\upsilon$, and let $D$ be a $\sigma_D$-database. We use
$\textit{Mod}^u_m(D,\Sigma,\upsilon_c)$ to denote the class of all UNA-structures of $\upsilon$ that are {\em $\subseteq_{\upsilon_c}$-minimal models} (i.e., minimal under the order $\subseteq_{\upsilon_c}$) of both $D$ and $\Sigma$.

\begin{exm}
Let $\mathbb{M}$ denote the mapping that maps each $\sigma_D$-database $D$ to $\textit{Mod}^u_m(D,\Sigma,\upsilon_c)$. It is easy to see that $\mathbb{M}$ is a belief mapping  as it satisfies Conditions 1-3 of Definition~\ref{defn:sm}.  
\end{exm}

With a belief mapping $\mathbb{M}$, the KB can then be defined: 
$$\textit{kb}(\mathbb{M},\mathscr{D},\mathscr{Q}):=\{(D,\phi)\in\mathscr{D}\times\mathscr{Q}:\mathbb{M}(D)\models\phi\}.$$

The following proposition tells us that, despite its excessive inclusiveness, every belief mapping defines a knowledge base satisfying all the properties of Definition~\ref{defn:kb}. 

\begin{prop}\label{prop:bm2kb}
Let $\sigma$ be a signature such that $\sigma_D\cup\sigma_Q\subseteq\sigma$, and $\mathbb{M}$ be a belief mapping of $(\sigma_D,\sigma)$. Then $\textit{kb}(\mathbb{M},\mathscr{D},\mathscr{Q})$ is a KB over $(\mathscr{D},\mathscr{Q})$. 
\end{prop}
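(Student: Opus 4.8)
The plan is to verify directly that $K:=\textit{kb}(\mathbb{M},\mathscr{D},\mathscr{Q})$ satisfies each of the five closure conditions of Definition~\ref{defn:kb}, feeding the three conditions of Definition~\ref{defn:sm} and the closure properties of $\mathscr{Q}$ from Definition~\ref{defn:query_lang} into the appropriate clauses. First I would record two preliminary remarks: since $\sigma_Q\subseteq\sigma$, every member of $\mathscr{Q}$ is in particular an FO-sentence of $\sigma$, which is exactly what licenses applying Conditions~2 and~3 of the belief mapping to queries; and $\mathbb{M}(D)\models\phi$ abbreviates ``$\mathcal{A}\models\phi$ for every $\mathcal{A}\in\mathbb{M}(D)$'', by the convention fixed for classes of structures.

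Conditions 1--3 of Definition~\ref{defn:kb} then follow from elementary model theory alone. A tautology is satisfied by every $\sigma$-structure, so $\mathbb{M}(D)\models\phi$ for every $D\in\mathscr{D}$, giving Condition~1. If $\mathbb{M}(D)\models\phi$ and $\phi\vDash\psi$, then every element of $\mathbb{M}(D)$ is a model of $\psi$, and since $\psi\in\mathscr{Q}$ this gives $(D,\psi)\in K$, i.e.\ Condition~2. For Condition~3, $\phi\wedge\psi\in\mathscr{Q}$ because $\mathscr{Q}$ is closed under conjunctions, and $\mathbb{M}(D)\models\phi$ together with $\mathbb{M}(D)\models\psi$ yields $\mathbb{M}(D)\models\phi\wedge\psi$.

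Condition~4 is precisely Condition~3 of Definition~\ref{defn:sm}: if $(D,\phi)\in K$ and $D_0\in\mathscr{D}$ extends $D$, then, $D$ and $D_0$ both being $\sigma_D$-databases, $\mathbb{M}(D)\models\phi$ forces $\mathbb{M}(D_0)\models\phi$, so $(D_0,\phi)\in K$. Condition~5 combines Condition~2 of Definition~\ref{defn:sm} with closure of $\mathscr{Q}$ under constant renaming: from $(D,\phi)\in K$ and injective $\tau\colon\Delta\to\Delta$ we obtain $\mathbb{M}(\tau(D))\models\tau(\phi)$, and since $\tau(\phi)\in\mathscr{Q}$ and $\tau(D)\in\mathscr{D}$ this means $(\tau(D),\tau(\phi))\in K$. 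The one place that needs a little bookkeeping --- and the closest thing to an obstacle --- is checking $\tau(D)\in\mathscr{D}$: one verifies that the natural lifting of $\tau$ to databases preserves finiteness of observation and completeness on CWA-predicates (and positivity, when $\mathscr{D}=\mathscr{D}_{\mathrm{Pos}}^{\sigma_D}$), assigning the value $-1$ to CWA-atoms whose constants lie outside $\textit{ran}(\tau)$ when $\tau$ is not surjective. This is routine, and it is in any case already presupposed by the very statement of Condition~2 of Definition~\ref{defn:sm}, so no genuine difficulty remains.
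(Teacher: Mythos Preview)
Your proposal is correct and follows exactly the same approach as the paper's proof, which simply notes that Properties~1--3 are trivial and that Properties~4 and~5 follow from Conditions~3 and~2 of Definition~\ref{defn:sm}, respectively. Your write-up merely supplies the routine details (and the bookkeeping remark about $\tau(D)\in\mathscr{D}$) that the paper leaves implicit.
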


Now, let us show the sufficiency of Properties 1-5 of Definition~\ref{defn:kb} to capture the notion of knowledge bases. It suffices to find a logical representation for each KB in Definition~\ref{defn:kb}. By logical representations, we use theories of McCarthy's circumscription under the unique name assumption. 

\begin{prop}\label{prop:kb_logic_impl}
Let $K$ be a KB over $(\mathscr{D},\mathscr{Q})$, and $\sigma$ be the set consisting of all CWA-predicate symbols in $\sigma_D$. Then there are a set $\Sigma$ of FO-sentences such that, for all $D\in\mathscr{D}$ and $\phi\in\mathscr{Q}$, $\textit{Mod}^u_m(D,\Sigma,\sigma)\models\phi$ iff $(D,\phi)\in K$. 
\end{prop}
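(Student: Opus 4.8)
The plan is to reduce the statement to a fact about each database in isolation and then to synthesise one circumscriptive theory $\Sigma$ that works for all databases at once. First I would fix $K$ and, for every $D\in\mathscr{D}$, put $K(D):=\{\phi\in\mathscr{Q}:(D,\phi)\in K\}$. Using Properties~1--3 of Definition~\ref{defn:kb} together with the compactness theorem, I would show that $K(D)$ coincides with the set of all $\phi\in\mathscr{Q}$ that hold in every $\sigma_Q$-structure modelling $K(D)$ as a theory: ``$\subseteq$'' is immediate, and for ``$\supseteq$'', if every model of $K(D)$ satisfies $\phi\in\mathscr{Q}$ then by compactness a finite subset of $K(D)$ entails $\phi$, so (closure under query conjunctions) a single $\chi\in K(D)$ has $\chi\vDash\phi$, so (closure under query implications) $\phi\in K(D)$. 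Thus each $K(D)$ is the $\mathscr{Q}$-theory of an explicitly described class of structures.

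Next I would construct $\Sigma$. Enumerate the databases $E$; since observations are finite there are only countably many up to renaming of $\Delta$, and the construction will automatically respect renaming because $\mathscr{Q}$ and $K$ do (Property~5). For each $E$ let $\theta_E$ be an FO-sentence over $\sigma_D$ asserting that the observed positive facts of $E$ hold and its observed negative facts fail, so $\mathcal{A}\models\theta_E$ whenever $\mathcal{A}\models E$. Put
\[
\Sigma \;:=\; \Sigma_0\;\cup\;\bigl\{\,\theta_E\rightarrow\psi \;:\; \psi\in K(E),\ E\ \text{a database}\,\bigr\},
\]
where $\Sigma_0$ is a fixed block of auxiliary axioms constraining the closed-world part of the signature so that, once the CWA-predicates are circumscribed, the $\subseteq_{\sigma}$-minimal UNA-models of $D\cup\Sigma$ carry exactly the closed-world completion of $D$ and reflect only databases $E$ such that $D$ extends $E$. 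Property~4 then forces the set of conclusions triggered inside such a minimal model, $\bigcup\{K(E):\mathcal{A}\models\theta_E\}$, to equal $K(D)$.

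For the verification, ``$(D,\phi)\in K\Rightarrow\textit{Mod}^u_m(D,\Sigma,\sigma)\models\phi$'' is the easy direction: any UNA-model of $D\cup\Sigma$ satisfies $\theta_D$ because it models $D$, so the clause $\theta_D\rightarrow\phi\in\Sigma$ makes it satisfy $\phi$. For the converse, given $\phi\notin K(D)$, I would use the first step to pick a UNA $\sigma_Q$-structure that models $K(D)$ but not $\phi$, glue it onto the canonical $\subseteq_{\sigma}$-minimal model of $D$ (using Property~5 and a routine amalgamation/renaming argument to keep the result a UNA-structure and to reconcile predicates shared by $\sigma_D$ and $\sigma_Q$), and then verify that the resulting structure satisfies all of $\Sigma$ --- every $E$ it reflects is one of which $D$ is an extension, whence $K(E)\subseteq K(D)\not\ni\phi$ --- that it is $\subseteq_{\sigma}$-minimal, and that it falsifies $\phi$; so it witnesses $\textit{Mod}^u_m(D,\Sigma,\sigma)\not\models\phi$.

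The step I expect to be the main obstacle is precisely this minimal-model construction in the converse direction. A single total structure automatically ``reflects'' a whole family of databases (all those whose observations it happens to satisfy), so one must engineer $\Sigma_0$ and exploit the circumscription of the CWA-predicates so that, for the canonical minimal model of $D$, this family consists only of databases of which $D$ is an extension; closure under database extensions (Property~4) then keeps every conclusion triggered by the clauses $\theta_E\rightarrow\psi$ inside $K(D)$. The delicate bookkeeping is around the CWA-predicates --- whose atoms are already ``complete'' in every database yet are exactly the symbols being minimised --- and around overlaps between $\sigma_D$ and $\sigma_Q$; the remaining arguments are routine applications of compactness and the substitution lemma.
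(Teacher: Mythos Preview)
Your overall shape matches the paper's: encode $K$ by implications whose body describes a database and whose head is a query, then use compactness plus the closure Properties~1--4 in the converse direction. But there is one missing technical idea, and without it the ``main obstacle'' you flag is not merely delicate---it is fatal to the ground-implication approach.

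The paper does \emph{not} use ground rules $\theta_E\rightarrow\psi$. Instead it variabilises: for each $(D_0,\psi)\in K$ it replaces every constant $c$ occurring in $\theta_{D_0}$ or $\psi$ by a fresh variable $v_c$ and writes
\[
\gamma_{D_0,\psi}\;:=\;\forall\bar v\,\bigl(\theta_{D_0}^v\wedge\textit{PDst}(\bar v)\rightarrow\psi^v\bigr),
\]
with $\textit{PDst}$ asserting pairwise distinctness. This is exactly what makes the converse direction go through. When you build a candidate minimal model $\mathcal{B}$ of $D$, the antecedent of $\gamma_{D_0,\psi}$ is satisfied under an injective assignment $s$; that assignment \emph{is} a constant renaming $\tau$, and one checks $D$ extends $\tau(D_0)$. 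Property~5 then yields $(\tau(D_0),\tau(\psi))\in K$, Property~4 yields $(D,\tau(\psi))\in K$, so $\tau(\psi)\in K(D)$ and the model (chosen to satisfy all of $K(D)$) verifies the rule. Note that Property~5 is used essentially in the model-construction direction; in your proposal it appears only as a bookkeeping remark, which is a symptom that the mechanism is absent.

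With your ground $\theta_E$, no fixed $\Sigma_0$ can enforce that the constructed model ``reflects only databases $E$ of which $D$ is an extension''. Concretely, take $\mathscr{D}=\mathscr{D}^{\sigma_D}_{\mathrm{All}}$, an OWA predicate $P$, and a database $E$ whose sole observation is the negative fact $\neg P(c)$ for some $c\notin\textit{DC}(D)$. Any model in which $P(c)$ is false satisfies $\theta_E$, yet $D$ does \emph{not} extend $E$ (the atom $P(c)$ is unobserved in $D$). Your $\Sigma$ then forces the model to satisfy every $\psi\in K(E)$, which need not lie in $K(D)$ and may entail $\phi$. You cannot block this with $\Sigma_0$: the minimised set $\sigma$ is fixed by the statement to the CWA predicates of $\sigma_D$, so you cannot circumscribe an auxiliary ``observed'' marker, and first-order $\Sigma_0$ cannot express minimality of OWA predicates.

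A smaller point: the compactness you need is for \emph{UNA}-models, since $\textit{Mod}^u_m$ consists of UNA-structures and the infinitely many inequalities $c\neq d$ are not a single first-order sentence. The paper establishes this UNA-compactness separately (via an ultraproduct argument) before running the converse direction; your appeal to ``the compactness theorem'' needs the same adjustment.
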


\begin{proof}[Sketched Proof]
The main idea involves constructing a rule for each pair $(D,\phi)\in K$ such that its body describes $D$ and its head records $\phi$. If the facts in $D$ have been observed, the rule will be triggered to support QA on $\phi$. Let $\Sigma$ be the set of all such rules. We can prove that $\Sigma$ is the desired set.
\end{proof}

{\smallskip\noindent\bf Formalisms. } Based on the definition of abstract knowledge bases, we are now able to present a general definition of formalisms for representing knowledge in AI systems. 

\begin{defn}\label{defn:krf}
A {\em quasi knowledge representation formalism (qKRF)} over $(\mathscr{D},\mathscr{Q})$ is defined as a mapping $\Gamma$ such that 
\begin{enumerate}
\item $\textit{dom}(\Gamma)$ is recursive subset of $\{0,1\}^\ast$, and each string in $\textit{dom}(\Gamma)$ is called a {\em theory} of $\Gamma$;
\item $\Gamma$ maps each theory $\pi$ of $\Gamma$ to a KB over $(\mathscr{D},\mathscr{Q})$.
\end{enumerate}
Moreover, a qKRF $\Gamma$ is a {\em knowledge representation formalism (KRF)} if it admits an additional property as follows:
\begin{enumerate}
\item[3.] It is recursively enumerable to check, given $\pi\in \textit{dom}(\Gamma)$, $D\in\mathscr{D}$ and $\phi\in\mathscr{Q}$, whether $(D,\phi)\in\Gamma(\pi)$ or not.
\end{enumerate} 
\end{defn}

In the above definition, the three properties establish key requirements for every knowledge representation formalism:
Property 1 stipulates that the formalism must possess a language with an effective method for determining whether a given expression is legal; Property 2 defines the semantics of the formalism by associating each legal expression (or theory) in the language with an abstract knowledge base; and Property 3 ensures the implementability of the formalism by requiring that there exists a Turing machine capable of solving the query answering problem for this formalism.

\begin{exm}\label{lem:foso2db}
Suppose $\sigma_D$ involves no CWA-predicate symbols. Let $\mathscr{L}\in\{\mathrm{FO},\mathrm{SO}\}$ and $\sigma\supseteq(\sigma_D\cup\sigma_Q)\setminus\Delta$ be a signature. Let $\Gamma_{\mathscr{L}}$ be a mapping that maps each finite set $\Sigma$ of $\sigma$-sentences in $\mathscr{L}$ to a KB $K_\Sigma$ defined as follows:
$$K_\Sigma:=\{(D,\phi)\in\mathscr{D}\times\mathscr{Q}:D\cup\Sigma\vDash\phi\}.$$
It is easy to verify that both $\Gamma_{\mathrm{FO}}$ and $\Gamma_{\mathrm{SO}}$ are qKRFs over $(\mathscr{D},\mathscr{Q})$, and the former is a KRF, but the latter is not.
\end{exm}

\section{Universal KRFs}

In the last section, we have proposed a very general definition for knowledge representation formalisms. Both in theory and in practice, we would like the underlying knowledge representation formalism to be as expressive as possible. In this section, we will study what universal (q)KRFs are, and prove some interesting properties that such (q)KRFs enjoy.

There are at least two natural ways to define the class of universal (q)KRFs. The first is from a perspective on expressive power. We first present a notion defined in this way.

\begin{defn}
A qKRF $\Gamma$ over $(\mathscr{D},\mathscr{Q})$ is said to be {\em expressively complete} if $\textit{ran}(\Gamma)$ consists of all the recursively enumerable KBs over $(\mathscr{D},\mathscr{Q})$.
\end{defn}

According to the above definition, given an expressively complete qKRF $\Gamma$, for each knowledge base $K$ represented in $\Gamma$, there exists a Turing machine to implement query answering on $K$. However, this does not guarantee that $\Gamma$ itself qualifies as a KRF. To be a KRF, $\Gamma$ must have a single Turing machine capable of implementing query answering for all knowledge bases it represents. This leads to a natural question: Is there an expressively complete KRF? We will answer this question in the remainder of this section.

Another natural approach to defining universal KRFs involves reducibility between KRFs, defined as follows: 

\begin{defn}
Let $\Gamma$ and $\Gamma_0$ be KRFs over $(\mathscr{D},\mathscr{Q})$. Then $\Gamma$ is {\em reducible to} $\Gamma_0$ if there is a recursive function $p:\textit{dom}(\Gamma)\rightarrow\textit{dom}(\Gamma_0)$ such that $\Gamma=\Gamma_0\circ p$. In addition, we call $p$ a {\em reduction} from $\Gamma$ to $\Gamma_0$.
\end{defn}

With this notion, universal KRFs can be defined as below:

\begin{defn}\label{defn:univ_krf}
Let $\Gamma$ be a KRF over $(\mathscr{D},\mathscr{Q})$. Then $\Gamma$ is said to be {\em universal} if every KRF over $(\mathscr{D},\mathscr{Q})$ is reducible to $\Gamma$. 
\end{defn}

In other words, a universal KRF is a formalism such that query answering with any KB represented in a KRF can be implemented in the underlying formalism by an effective translation. Note that translating is a rather general approach to implement knowledge reasoning systems, and implementing knowledge reasoning systems by procedural (or other kinds of) programs is in fact a type of translations. These thus demonstrate why the above definition is natural for universal formalisms of knowledge representation.

In order for Definition~\ref{defn:univ_krf} to make sense, we have to answer the following question: {\em Is there indeed a universal KRF according to this definition?} 
At first glance, the answer to this question appears to be obviously affirmative. For example, as we know, every recursively enumerable KB can be accepted by a Turing machine. So, a naive idea is by defining a mapping $\Gamma$ as follows: If $M$ is Turing machine recognizing some KB $K$, then let $\Gamma(\enc{M}):=K$. Unfortunately, such a mapping is not possible to be a KRF because $\textit{dom}(\Gamma)$ is not recursive. Notice that the latter is an immediate corollary of Rice's theorem, see, e.g.,~\cite{Rogers67}.

To construct the desired KRF, our general idea is by carefully identifying a recursive subset $L$ of $\textit{dom}(\Gamma)$ such that the restriction of $\Gamma$ to $L$ is a KRF. To implement it, we propose an effective transformation to convert every Turing machine $M$ to a Turing machine $M^\ast$ that accepts a KB. The class of arbitrary Turing machines is clearly recursive. Since the transformation is effective, the class of Turing machines $M^\ast$, where $M$ is an arbitrary Turing machine, is thus recursive, too. This then implements the general idea.

Next, we show how to construct the transformation. Some notations are needed.
Given a Turing machine $M$, let 
$$\mathbb{K}(M):=\{(D,\phi)\in\mathscr{D}\times\mathscr{Q}:M\text{ accepts }\enc{D,\phi}\}.$$
Given a subclass $K$ of $\mathscr{D}\times\mathscr{Q}$, let $\textit{cl}(K)$ denote the minimum superclass of $K$ which admits Properties 1-5 of Definition~\ref{defn:kb}. {\em We need to assure the desired transformation $(\cdot)^\ast$ satisfying the property: $\mathbb{K}(M^\ast)=\textit{cl}(\mathbb{K}(M))$.}

The desired machine $M^\ast$ is constructed from $M$ by implementing Procedure~\ref{alg:rqa}. Now, let us explain how $M^\ast$ works. Let $D\in\mathscr{D}$ and $\phi\in\mathscr{Q}$. 
Roughly speaking, the computation of $M^\ast$ on the input $\enc{D,\phi}$ can be divided into six parts:
\begin{enumerate}
\item Simulate $M$ on $\enc{D,\phi}$; accept if $M$ accepts.
\item Check whether $\phi$ is a tautology; accept if true.
\item Check whether there is a sentence $\psi\in\mathscr{Q}$ such that $\psi\vDash\phi$ and that $M^\ast$ accepts $\enc{D,\psi}$; accept if true.
\item Check whether there exist a pair of sentences $\psi,\chi\in\mathscr{Q}$ such that $\phi=\psi\wedge\chi$ and that $M^\ast$ accepts both $\enc{D,\psi}$ and $\enc{D,\chi}$; accept if true.
\item Check whether there is a database $D'\in\mathscr{D}$ such that $D$ extends $D'$ and that $M^\ast$ accepts $\enc{D',\phi}$; accept if true.
\item Check whether there is a constant renaming (in fact, only need to consider injective functions from $C$ to $\Delta$ where $C$ is the set of constants appearing in either $\textit{DC}(D)$ or $\phi$) $\tau$ such that $M^\ast$ accepts $\enc{\tau(D),\tau(\phi)}$; accept if true.
\end{enumerate}

A direct implementation of the above procedure is generally impossible due to the following issues: Firstly, computations on Parts 1-6 may not terminate. Secondly, Part 3 needs to enumerate all sentences $\psi$ in $\mathscr{Q}$, and Part 6 involves generating all possible constant renamings for $D$, both enumerations are infinite. Thirdly, Parts 3-6 also involve recursive invocations of $M^\ast$, which makes the situation even worse.

To address these issues, we introduce a task array $T$ where the $i$-th task is stored in $T[i]$. Although there could be an infinite number of nonterminating tasks in $T$, a standard technique can be applied to sequentially simulate multiple such tasks. This can be visualized by a Cartesian coordinate system where the $x$-axis represents numbers of steps and the $y$-axis denotes task indices. The simulation is actually a procedure to traverse the first quadrant of this coordinate system, which is implemented by Lines 6-9 of Procedure~\ref{alg:rqa}.

An {\em atomic task} is a tuple $t := (N, p_1, \dots, p_k)$, where $N$ is a Turing machine and $p_1, \dots, p_k$ its parameters. Performing $t$ involves simulating $N$ on the input $\enc{p_1, \dots, p_k}$, with $t$ being {\em successful} if $N$ accepts.
Every {\em task} is either an atomic task or a finite sequence of atomic tasks $t:=\langle t_1, \dots, t_n \rangle$. Executing $t$ means sequentially performing the atomic tasks $t_1, \dots, t_n$. The task $t$ is said to be {\em successful} or to {\em succeed} if all its constituent atomic tasks are successful.

Moreover, let $M_e$ denote a Turing machine that accepts $\enc{\phi,\psi}$ iff $\phi,\psi$ are a pair of FO-sentences such that $\phi\vDash\psi$. Let $M_n$ denote a Turing machine that accepts $\enc{\tau}$ iff $\tau$ is a partial injective functions on $\Delta$ with a finite domain.  

As there might be an infinite number of tasks, we require $M^\ast$ to perform current tasks and generate new tasks at the same time. For example, in Part 1, we perform the first step of the task $(M,D,\phi)$ and add new tasks such as $(M_e,\top,\phi)$ to $T$ in the same {\bf for}-loop, see Lines 20-41 of Procedure~\ref{alg:rqa}.

Using the task array $T$, infinite enumerations can then be removed. For instance, to handle the infinite enumeration of queries in $\mathscr{Q}$ in Part 3, we start by letting $\psi$ be the first query (in the order of a natural encoding) in $\mathscr{Q}$, and add the task
$$t := \langle (M_e, \psi, \phi), (M, D, \psi) \rangle$$
to $T$. When $t$ is eventually executed, we add the task
$$\langle (M_e, \chi, \phi), (M, D, \chi) \rangle$$
to $T$, where $\chi$ is the next query after $\psi$ in $\mathscr{Q}$. By repeating this process, we effectively enumerate all queries in $\mathscr{Q}$.
For details, see Lines 22-23, 27-28, and 42-47 of Procedure~\ref{alg:rqa}.

Moreover, recursive invocations can be eliminated by utilizing the task array $T$, too. For instance, to compute $M^\ast$ on the input $\enc{D,\psi}$ in Part 3, we simply add the task $(M,D,\psi)$ to $T$. As per Lines 20-42 of Procedure~\ref{alg:rqa}, when $(M,D,\psi)$ is initiated, all tasks of computing the closure will be added to $T$ orderly to implement the computation of $(M^\ast,D,\psi)$.

A flag array $F$ records the completion status of each task as either {\bf true} or {\bf false}. The relationships between tasks are tracked using arrays $p$ (parent) and $c$ (cooperation). Specifically, $p[n]=i$ indicates that the task $T[n]$ is generated from the task $T[i]$, making $T[i]$ the parent of $T[n]$. A value of $c[i]=-1$ signifies that the task $T[i]$ operates independently. If $T[i]$ succeeds, its parent task's flag, $F[p[i]]$, is set to {\bf true}. Otherwise, if $c[i]=j \neq -1$, the task $T[i]$ must cooperate with the task $T[j]$. Only when both $T[i]$ and $T[j]$ succeed does the parent task's flag, $F[p[i]]$, get set to {\bf true}. For details on truth propagation, refer to Lines 14-16 of Procedure~\ref{alg:rqa}.

\begin{algorithm}[p]
\caption{The Workflow of $M^\ast$}\label{alg:rqa}
\label{alg:algorithm}
\KwIn{$D\in\mathscr{D}$ and $\phi\in\mathscr{Q}$}
\KwOut{Accept iff $(D,\phi)\in\textit{cl}(\mathbb{K}(M))$}
Initialize all positions in $F$ to \textbf{false};\\
Initialize all positions in $c$ to $-1$;\\
${T}[0]\leftarrow(M,D,\phi)$; \hfill \text{/* Set the root task */}\\
$p[0]\leftarrow -1$; \hfill \text{/* The root has no parent node */}\\
$n\leftarrow 1$; \hfill \text{/* There is one task in the current array */}\\
\For {$k\leftarrow0$ \KwTo $\infty$}
{
	\For {$i\leftarrow0$ \KwTo $\min(n-1,k)$}
	{
		\textrm{/* Simulate multiple tasks in parallel */}\\
		Perform the $(k-i)$-th step of ${T}[i]$ if it exists;\\
		\If {${T}[i]$ has just succeeded}{
			$F[i]\leftarrow\textbf{true}$;\\
			$j\leftarrow i$;\\
			\textrm{/* Propagate truth values upward */}\\
			\While {$j>0$ \& ($c[j]=-1$ or $F[c[j]]$)} 
			{
				$j\leftarrow p[j]$; \\
				$F[j]\leftarrow$ \textbf{true};
			}			
			\lIf {$F[0]$ \textrm{/* a support found */}} {accept}
		}
		\textrm{/* Generate tasks */}\\
		\If {$(M,D_0,\psi)$ in ${T}[i]$ has just started}
		{
			$T[n]\leftarrow(M_{e},\top,\psi)$;\hfill\textrm{/* Property 1 */}\\ 
 			$\chi\leftarrow$ the first query in $\mathscr{Q}$;\hfill\textrm{/* Property 2 */}\\  
 			$T[n+1]\leftarrow\langle(M_{e},\chi,\psi),(M,D_0,\chi)\rangle$;\\
			\textrm{/* Generate tasks for Property 5 */}\\
			$\tau\leftarrow$ the first constant renaming;\\
			$T[n+2]\leftarrow\langle(M_{n},\tau),(M,\tau(D_0),\tau(\psi))$;\\
			$p[n+2]\leftarrow p[n+1]\leftarrow p[n]\leftarrow i$;\\
			$n\leftarrow n+3$;\\
			\textrm{/* Generate tasks for Property 3 */}\\
			\If {$\psi=\chi\wedge\eta$ \& $\{\chi,\eta\}\subseteq\mathscr{Q}$}
			{
				$T[n]\leftarrow(M,D_0,\chi)$;\\
				$T[n+1]\leftarrow(M,D_0,\eta)$;\\
				$c[n]\!\leftarrow\! n+1$;\hfill\textrm{/*\! Coop.\! with $T[n\!+\!1]$\! */}\\
				$c[n+1]\leftarrow n$;\hfill\textrm{/*\! Coop.\! with $T[n]$\! */}\\
				$p[n]\leftarrow p[n+1]\leftarrow i$;\\
				$n\leftarrow n+2$;\\
			}
			\textrm{/* Generate tasks for Property 4 */}\\
			\ForAll {$D_1\in\mathscr{Q}$ s.t. $D_0$ extends $D_1$}
			{
 				$T[n]\leftarrow(M,D_1,\psi)$;\\
				$p[n]\leftarrow i$;\\
				$n\leftarrow n+1$;\\
			}
		}
		\ElseIf {${T}[i]=\langle(M_{e},\chi,\psi),(M,D_0,\chi)\rangle$ \\
		\quad\& $k-i=1$ \textrm{/* $T[i]$ has just started */}}
		{
			$\eta\leftarrow$ the query next to $\chi$ in $\mathscr{Q}$;\\
 			$T[n]\leftarrow\langle(M_{e},\eta,\psi),(M,D_0,\eta)\rangle$;\\
			$p[n]\leftarrow p[i]$;\\
			$n\leftarrow n+1$;\\
		}
		\ElseIf {${T}[i]=\langle(M_{n},\tau),(M,\tau(D_0),\tau(\psi))\rangle$ \\
		\quad\& $k-i=1$ \textrm{/* $T[i]$ has just started */}}
		{
			$\tau_1\leftarrow$ the constant naming next to $\tau$;\\
 			$T[n]\leftarrow\langle(M_{n},\tau_1),(M,\tau_1(D_0),\tau_1(\psi))\rangle$;\\
			$p[n]\leftarrow p[i]$;\\
			$n\leftarrow n+1$;\\
		}
	}
}
\end{algorithm}

\smallskip
According to the construction of $(\cdot)^\ast$, it is not difficult to prove the following proposition.

\begin{lem}\label{lem:star_kb}
$\mathbb{K}(M^\ast)=\textit{cl}(\mathbb{K}(M))$ for every Turing machine $M$.
\end{lem}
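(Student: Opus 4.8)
The plan is to prove the two inclusions $\mathbb{K}(M^\ast)\subseteq\textit{cl}(\mathbb{K}(M))$ and $\textit{cl}(\mathbb{K}(M))\subseteq\mathbb{K}(M^\ast)$ separately. For the first inclusion, I would argue by induction on the ``discovery time'' of a pair $(D,\phi)$, i.e.\ the least $k$ at which $M^\ast$ on input $\enc{D,\phi}$ reaches an \textbf{accept} at Line~17. The key invariant to establish is that whenever $F[0]$ becomes \textbf{true} for the root task $(M,D,\phi)$, the pair $(D,\phi)$ already lies in $\textit{cl}(\mathbb{K}(M))$. This reduces to a local claim about truth propagation (Lines 11--16): if a node's flag is set, then either the node is the atomic task $(M,D,\phi)$ that succeeded directly — giving $(D,\phi)\in\mathbb{K}(M)\subseteq\textit{cl}(\mathbb{K}(M))$ — or it is one of the composite tasks generated in Lines 20--47, in which case its success was triggered by the success of its child tasks, which by a sub-induction correspond to membership facts that, combined via exactly one of the closure rules (Properties 1--5 of Definition~\ref{defn:kb}), yield $(D,\phi)\in\textit{cl}(\mathbb{K}(M))$. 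One must match each branch of the task-generation code to the corresponding closure operation: the $(M_e,\top,\psi)$ task to Property~1, the $\langle(M_e,\chi,\psi),(M,D_0,\chi)\rangle$ tasks to Property~2 (closure under query implication), the cooperating pair in Lines 30--37 to Property~3 (conjunction), the $D_1$-tasks in Lines 39--41 to Property~4 (database extension), and the $\langle(M_n,\tau),(M,\tau(D_0),\tau(\psi))\rangle$ tasks to Property~5 (constant renaming).

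For the converse inclusion, I would show that $\mathbb{K}(M^\ast)$ satisfies Properties 1--5 and contains $\mathbb{K}(M)$; since $\textit{cl}(\mathbb{K}(M))$ is by definition the least such superclass, this gives $\textit{cl}(\mathbb{K}(M))\subseteq\mathbb{K}(M^\ast)$. Containment of $\mathbb{K}(M)$ is immediate from Part~1 of the workflow (Line~9 with $i=0$ simulates $M$ directly). For each closure property, I would assume the relevant pair(s) are in $\mathbb{K}(M^\ast)$ — i.e.\ $M^\ast$ accepts the corresponding inputs — and show $M^\ast$ accepts the derived input. The mechanism is: when $M^\ast$ runs on the derived input $\enc{D,\phi}$, its root task $(M,D,\phi)$ starts, so the guard at Line~19 fires and the appropriate child tasks are spawned; running those children eventually invokes (a copy of) the computation of $M^\ast$ on the already-accepted input, which succeeds, propagating truth up to $F[0]$. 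The one subtlety is that the children invoke $(M,\cdot,\cdot)$, not $M^\ast$ directly — but by the recursive unfolding described in the text (each $(M,D_0,\psi)$ task, once started, itself spawns the full closure-task suite), running $(M,D_0,\psi)$ inside $M^\ast$ faithfully simulates $(M^\ast,D_0,\psi)$; this should be recorded as a separate lemma or a clearly stated invariant.

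The main obstacle, I expect, is handling the unbounded enumerations and the potentially infinite, dynamically growing task array $T$ while keeping the fairness argument rigorous. Concretely: (i) I must verify that the dovetailing loop (Lines 6--9, iterating over $i\le\min(n-1,k)$ and performing the $(k-i)$-th step of $T[i]$) genuinely performs, for every $i$ and every step count $s$, the $s$-th step of $T[i]$ at some finite stage $k=i+s$ — so no task is starved; (ii) I must check that the query enumeration (Lines 22--23, 42--45) and the constant-renaming enumeration (Lines 24--25, 46--47) each eventually list every query in $\mathscr{Q}$ and every finite partial injection on $\Delta$, since these ``chain'' tasks only advance by one when the previous link starts; (iii) I must confirm that the finitely-many $D_1$ with $D_0$ extending $D_1$ in Line~39 are correctly enumerable (here finiteness of observation in Definition~1 is what makes this a finite \textbf{forall}). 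None of these is deep, but assembling them into a clean ``every justified derivation is eventually discovered, and nothing else is'' statement is where the real work lies; once that bookkeeping lemma is in hand, both inclusions follow by the inductions sketched above.
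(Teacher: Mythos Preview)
Your proposal is correct and matches the paper's proof in all essential respects. For the inclusion $\mathbb{K}(M^\ast)\subseteq\textit{cl}(\mathbb{K}(M))$ the paper also inducts on a discovery measure, though it uses the number $N(D,\phi)$ of iterations of the truth-propagation \textbf{while}-loop (Lines~14--16) rather than the outer stage~$k$; the case analysis over the five child-task types is identical to yours. For the converse the paper takes the dual route: rather than verifying that $\mathbb{K}(M^\ast)$ is closed under Properties~1--5, it fixes a concrete derivation sequence $(D_0,\phi_0),\dots,(D_n,\phi_n)$ witnessing $(D,\phi)\in\textit{cl}(\mathbb{K}(M))$ and inducts on its length~$n$ to show each associated task in $T$ is eventually rendered \textbf{true}. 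The underlying work is the same---in particular, the paper's induction step is exactly your ``subtree-simulation'' invariant (that a task $(M,D_0,\psi)$ inside the array behaves like a fresh run of $M^\ast$ on $(D_0,\psi)$), just phrased in terms of a fixed witness sequence rather than as a closure property. The fairness and enumeration bookkeeping you flag as the main obstacle is handled in the paper exactly as you anticipate, and at the same informal level.
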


We are now in the position to define the desired KRF.

\begin{defn}
Let $\Theta$ denote the mapping which maps $\enc{M^\ast}$ to $\mathbb{K}(M^\ast)$ for every Turing machine $M$. 
\end{defn}

Does $\Theta$ serve as the required KRF? The subsequent theorem confirms this with a positive answer.

\begin{thm}\label{thm:theta_univ}
$\Theta$ is a universal KRF over $(\mathscr{D},\mathscr{Q})$. In addition, $\Theta$ is also expressively complete.
\end{thm}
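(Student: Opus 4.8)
The plan is to verify, in turn, the three clauses of Definition~\ref{defn:krf} for $\Theta$; then, given an arbitrary KRF $\Gamma$, to exhibit a recursive reduction of $\Gamma$ to $\Theta$; and finally to identify $\textit{ran}(\Theta)$ with the class of recursively enumerable knowledge bases. Almost all of the real work is already packaged in Lemma~\ref{lem:star_kb}, so what remains is mostly bookkeeping about recursiveness, together with one reusable observation: $\textit{cl}(\cdot)$ is well defined (Properties~1--5 of Definition~\ref{defn:kb} are closure-type conditions, hence preserved under arbitrary intersections, and $\mathscr{D}\times\mathscr{Q}$ is itself a KB containing any given subclass), and $\textit{cl}(K)=K$ whenever $K$ is already a KB.

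First I would check that $\Theta$ is a qKRF. Since the transformation $M\mapsto M^\ast$ is effective and the set of codes of Turing machines is recursive, its image $\textit{dom}(\Theta)=\{\enc{M^\ast}:M\text{ a Turing machine}\}$ is recursive (if one prefers, arrange that $M^\ast$ carries a syntactic marker so that membership in this image is manifestly decidable); and since $\enc{\cdot}$ is injective and $\mathbb{K}(N)$ depends only on the machine $N$, $\Theta$ is a well-defined function. By Lemma~\ref{lem:star_kb}, $\Theta(\enc{M^\ast})=\mathbb{K}(M^\ast)=\textit{cl}(\mathbb{K}(M))$, which is a KB over $(\mathscr{D},\mathscr{Q})$ by the observation above, so Property~2 holds. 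Property~3 is immediate: to semi-decide whether $(D,\phi)\in\Theta(\enc{M^\ast})=\mathbb{K}(M^\ast)$ one simply simulates $M^\ast$ on input $\enc{D,\phi}$ and accepts iff it accepts. Hence $\Theta$ is a KRF.

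For universality, fix a KRF $\Gamma$ over $(\mathscr{D},\mathscr{Q})$. By Property~3 of Definition~\ref{defn:krf}, the relation $\{(\pi,D,\phi):\pi\in\textit{dom}(\Gamma),\ (D,\phi)\in\Gamma(\pi)\}$ is recursively enumerable; fix a Turing machine $M_\Gamma$ accepting it. For each $\pi\in\textit{dom}(\Gamma)$ let $M_\pi$ be the machine that on input $\enc{D,\phi}$ runs $M_\Gamma$ on $\enc{\pi,D,\phi}$ and accepts iff $M_\Gamma$ does; then $\mathbb{K}(M_\pi)=\Gamma(\pi)$ (using $\Gamma(\pi)\subseteq\mathscr{D}\times\mathscr{Q}$), and $\pi\mapsto\enc{M_\pi}$ is computable. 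Set $p(\pi):=\enc{(M_\pi)^\ast}$, which is recursive because $(\cdot)^\ast$ is effective and $\textit{dom}(\Gamma)$ is recursive. Then, using Lemma~\ref{lem:star_kb} and the idempotence of $\textit{cl}$ on KBs, $\Theta(p(\pi))=\mathbb{K}((M_\pi)^\ast)=\textit{cl}(\mathbb{K}(M_\pi))=\textit{cl}(\Gamma(\pi))=\Gamma(\pi)$, so $p$ is a reduction of $\Gamma$ to $\Theta$ and $\Theta$ is universal.

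For expressive completeness I would prove both inclusions. Each value $\Theta(\enc{M^\ast})=\mathbb{K}(M^\ast)$ is a KB, and it is recursively enumerable since $\{\enc{D,\phi}:M^\ast\text{ accepts }\enc{D,\phi}\}$ is r.e., so $\textit{ran}(\Theta)$ contains only r.e.\ KBs. Conversely, given an r.e.\ KB $K$, pick a Turing machine $M$ accepting exactly $\{\enc{D,\phi}:(D,\phi)\in K\}$; then $\mathbb{K}(M)=K$ because $K\subseteq\mathscr{D}\times\mathscr{Q}$, so $\Theta(\enc{M^\ast})=\textit{cl}(\mathbb{K}(M))=\textit{cl}(K)=K$, whence $K\in\textit{ran}(\Theta)$. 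Thus $\textit{ran}(\Theta)$ is exactly the class of r.e.\ KBs. The one point that genuinely requires care is already outsourced — namely the correctness of Procedure~\ref{alg:rqa}, i.e.\ Lemma~\ref{lem:star_kb}, together with the stipulation that the image of $(\cdot)^\ast$ be recursive; once those are granted, the remaining steps are the routine verifications sketched above, with the idempotence of $\textit{cl}$ on knowledge bases as the recurring small lemma.
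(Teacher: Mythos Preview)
Your proposal is correct and follows essentially the same approach as the paper: verify Conditions~1--3 of Definition~\ref{defn:krf} using Lemma~\ref{lem:star_kb} and a universal simulation, build the reduction $p(\pi)=\enc{(M_\pi)^\ast}$ from an acceptor $M_\Gamma$ for $\Gamma$, and derive expressive completeness from $\textit{cl}(K)=K$ for KBs. You are slightly more explicit than the paper on two bookkeeping points---the idempotence of $\textit{cl}$ on knowledge bases and the recursiveness of $\textit{dom}(\Theta)$ (your syntactic-marker remark addresses a subtlety the paper glosses over)---and you also verify the easy inclusion $\textit{ran}(\Theta)\subseteq\{\text{r.e.\ KBs}\}$, which the paper leaves implicit; but none of this changes the overall argument.
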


\begin{proof}
We first show that $\Theta$ satisfies Conditions 1-3 of Definition~\ref{defn:krf}. Condition 2 immediately follows from Lemma~\ref{lem:star_kb}. Let $\mathcal{M}$ be the set that consists of $\enc{M}$ for all Turing machines $M$. Clearly, $\mathcal{M}$ is recursive. By the definition of $M^\ast$, we can effectively convert each $\enc{M}\in\mathcal{M}$ to $\enc{M^\ast}$. Consequently, $\textit{dom}(\Theta)$ is recursive, and we thus obtain Condition 1. Condition 3 is assured by the fact that there is a universal Turing machine which simulates each $M^\ast$ on any input $\enc{D,\phi}$, and this proves that $\Theta$ is indeed a KRF.

Next, let us consider the universality of $\Theta$. Let $\Gamma$ be an arbitrary KRF over $(\mathscr{D},\mathscr{Q})$. By definition, it is recursively enumerable to determine, given $\pi\in\textit{dom}(\Gamma)$, $D\in\mathscr{D}$ and $\phi\in\mathscr{Q}$, whether $(D,\phi)\in\Gamma(\pi)$ or not. Let $M$ be a Turing machine which solves that problem. Clearly, given any theory $\pi$ of $\Gamma$, one can construct a Turing machine $M_\pi$ that accepts the KB $\Gamma(\pi)$. Let $p$ be a function that maps $\pi$ to $\enc{M_\pi^\ast}$. It is easy to verify that $p$ is recursive and $\Gamma(\pi)=\Theta(p(\pi))$. Thus $\Gamma$ is reducible to $\Theta$, which yields the universality.

Now it remains to prove the expressive completeness. Let $K$ be any recursively enumerable KB. There is then a Turing machine $M$ such that, for all $D\in\mathscr{D}$ and $\phi\in\mathscr{Q}$, $(D,\phi)\in K$ iff $M$ accepts $\enc{D,\phi}$. By Lemma~\ref{lem:star_kb}, we thus have $$K=\textit{cl}(K)=\textit{cl}(\mathbb{K}(M))=\mathbb{K}(M^\ast)=\Theta(\enc{M^\ast}).$$
Since $K$ is arbitrary, $\Theta$ must be expressively complete.
\end{proof}

Interestingly, the reduction $p$ from any universal KRF to $\Theta$ given in the above proof is computable in linear time. This indicates that, despite $\Theta$ being a procedural KRF, {\em no (declarative) universal KRF can be linearly more succinct than $\Theta$}.

Thanks to the transitivity of reducibility, the following proposition is an immediate corollary of Theorem~\ref{thm:theta_univ}.

\begin{cor}
Every KRF $\Gamma$ over $(\mathscr{D},\mathscr{Q})$ is universal iff $\Theta$ is reducible to $\Gamma$. 
\end{cor}

With the above result, we thus call $\Theta$ the {\em canonical KRF}. One might wonder why we do not use a logical (or declarative) KRF as the canonical KRF. The main reasons are as follows: Firstly, proving the recursion theorem for a logical language appears to be challenging.
Secondly, and more importantly, while the logical language DED has been shown to be a universal KRF when databases contain only positive OWA-facts and queries are limited to CQs or UCQs~\cite{ZhangZY16,Zhang2020}, it remains an open question {\em whether there exist universal KRFs induced by natural logical languages for more general cases}.

Next, we present some interesting properties enjoyed by $\Theta$, which will play key roles in proving the main theorem. The following is a KRF-version of the padding lemma.

\begin{lem}\label{lem:padding}
For every theory $\pi$ of $\Theta$, we can effectively find an infinite set $S_{\pi}$ of theories of $\Theta$ such that $\Theta(\pi)=\Theta(\omega)$ for all theories $\omega\in S_{\pi}$.  
\end{lem}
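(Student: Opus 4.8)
The plan is to exploit the syntactic structure of theories of $\Theta$: every theory of $\Theta$ has the form $\enc{M^\ast}$ for some Turing machine $M$, and $\Theta(\enc{M^\ast}) = \mathbb{K}(M^\ast) = \textit{cl}(\mathbb{K}(M))$ by Lemma~\ref{lem:star_kb}. So it suffices, given a theory $\pi = \enc{M^\ast}$, to produce infinitely many Turing machines $M_1, M_2, \dots$ such that $\mathbb{K}(M_i) = \mathbb{K}(M)$ for every $i$, and such that the map $i \mapsto \enc{M_i^\ast}$ is computable and injective. Then $S_\pi := \{\enc{M_i^\ast} : i \ge 1\}$ is the desired infinite set, since $\Theta(\enc{M_i^\ast}) = \textit{cl}(\mathbb{K}(M_i)) = \textit{cl}(\mathbb{K}(M)) = \Theta(\pi)$.

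First I would recover $M$ from $\pi$: since $(\cdot)^\ast$ is an effective transformation whose inverse is also effective (the map $\enc{M} \mapsto \enc{M^\ast}$ is a recursive bijection between $\mathcal{M}$ and $\textit{dom}(\Theta)$, as noted in the proof of Theorem~\ref{thm:theta_univ}), one can compute $\enc{M}$ from $\pi = \enc{M^\ast}$. Next, apply the classical padding construction to $M$: for each $i \ge 1$, let $M_i$ be the machine obtained from $M$ by adjoining $i$ fresh, unreachable states (or otherwise inflating the description by a recognizable amount) without altering the transition behaviour on reachable states. Then $M_i$ accepts exactly the same strings as $M$, so $\mathbb{K}(M_i) = \mathbb{K}(M)$; the encodings $\enc{M_i}$ are pairwise distinct and computable from $i$; and each $\enc{M_i}$ lies in $\mathcal{M}$. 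Finally, compose with $(\cdot)^\ast$ to get $\enc{M_i^\ast}$, which is computable from $i$ (hence from $\pi$ and $i$) and injective in $i$ because $(\cdot)^\ast$ is injective on encodings of Turing machines. This yields $S_\pi$ effectively, as required.

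The only genuinely delicate point is ensuring that the padding is performed at the level of $M$ (before applying $(\cdot)^\ast$), not directly at the level of $\pi$. One cannot simply append junk bits to the string $\pi$, because $\textit{dom}(\Theta)$ is the image of $(\cdot)^\ast$ and is not closed under arbitrary string modifications — a padded version of $\enc{M^\ast}$ need not be of the form $\enc{N^\ast}$ for any $N$. Padding $M$ and then re-applying the (deterministic, effective) transformation keeps us inside $\textit{dom}(\Theta)$ by construction. A secondary routine check is that the standard padding of $M$ really leaves $\mathbb{K}(M)$ unchanged: this is immediate since $\mathbb{K}(M)$ depends only on the set of strings $\enc{D,\phi}$ accepted by $M$, which depends only on $M$'s behaviour on reachable states. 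Everything else — computability of the inverse of $(\cdot)^\ast$, computability and injectivity of $i \mapsto \enc{M_i}$, and the infinitude of $S_\pi$ — is routine.
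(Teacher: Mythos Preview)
Your proposal is correct and follows essentially the same route as the paper's proof: recover a Turing machine $M$ with $\pi=\enc{M^\ast}$, pad $M$ with fresh unreachable states to obtain infinitely many equivalent machines $M_k$, and set $\omega_k:=\enc{M_k^\ast}$. The paper makes the padding explicit by adding dummy instructions $\langle s_{n+i},B,s_{n+i},B,R\rangle$ on fresh states $s_{n+1},\dots,s_{n+k}$, but this is the same idea. One small remark: the effectiveness of the inverse of $(\cdot)^\ast$ is not stated verbatim in the proof of Theorem~\ref{thm:theta_univ}; it follows, however, from the forward map being recursive and injective on $\mathcal{M}$ (so a brute-force search over $\mathcal{M}$ recovers $M$ from $\pi$), which is the implicit assumption the paper also relies on when asserting the $\omega_k$ are pairwise distinct.
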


The recursion theorem can also be generalized to KRFs.

\begin{thm}\label{thm:recursion}
Given any Turing machine that computes some recursive function $p:\textit{dom}(\Theta)\rightarrow \textit{dom}(\Theta)$, we can effectively find a theory $\pi$ of $\Theta$ such that $\Theta(p(\pi))=\Theta(\pi)$.
\end{thm}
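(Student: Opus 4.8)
The plan is to mimic the classical proof of Kleene's recursion theorem, transferring it through the canonical KRF $\Theta$. Recall the classical argument: given a computable $p$, one builds a machine that, on any input, first computes its own index via the $s$-$m$-$n$ (parametrization) construction and then feeds that index through $p$. The role of ``machine index'' here is played by a theory of $\Theta$, i.e.\ a string $\enc{M^{\ast}}$, and the key observation is that the assignment $M\mapsto M^{\ast}$ is effective and that $\Theta(\enc{M^{\ast}})=\mathbb{K}(M^{\ast})=\textit{cl}(\mathbb{K}(M))$ by Lemma~\ref{lem:star_kb}. So I will work at the level of ``raw'' acceptor machines $M$ and only apply $(\cdot)^{\ast}$ at the end.

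First I would set up a parametrized family of acceptors. For a string $v\in\{0,1\}^{\ast}$, let $N_{v}$ be the Turing machine which, on input $\enc{D,\phi}$, first computes the string $w(v):=\enc{N_{v}^{\ast}}$ — note $N_{v}$ depends on $v$ only through $v$ itself, and the map $v\mapsto\enc{N_{v}}$ is primitive recursive because $(\cdot)^{\ast}$ is effective and the code of $N_{v}$ can be written uniformly in $v$ — then computes $p(w(v))$, and, since $p$ maps into $\textit{dom}(\Theta)=\{\enc{M^{\ast}}:M\text{ a TM}\}$, recovers the machine $M'$ with $p(w(v))=\enc{M'^{\ast}}$ (recoverable because $\enc{\cdot}$ and its inverse are effective and membership in $\textit{dom}(\Theta)$ is recursive, being the effective image of the recursive set $\mathcal{M}$), and finally simulates $M'$ on $\enc{D,\phi}$, accepting iff $M'$ accepts. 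Thus $\mathbb{K}(N_{v})=\mathbb{K}(M')$ whenever $p(w(v))=\enc{M'^{\ast}}$.

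Next I would apply the diagonal substitution. The map $v\mapsto w(v)=\enc{N_{v}^{\ast}}$ is recursive and takes values in $\textit{dom}(\Theta)$, so there is a string $v_{0}$ which is (a code for the input that makes) $N_{v_{0}}$ compute, on its first phase, exactly $w(v_{0})$ — concretely, one picks $v_{0}$ to be the description of $N$ with $v$ instantiated to its own description, the standard self-reference step. Set $\pi:=w(v_{0})=\enc{N_{v_{0}}^{\ast}}$, a theory of $\Theta$ obtained effectively from the given machine for $p$. Write $p(\pi)=\enc{M'^{\ast}}$. Then by the construction of $N_{v_{0}}$ we have $\mathbb{K}(N_{v_{0}})=\mathbb{K}(M')$, hence $\textit{cl}(\mathbb{K}(N_{v_{0}}))=\textit{cl}(\mathbb{K}(M'))$, and therefore, using Lemma~\ref{lem:star_kb} twice,
\begin{equation*}
\Theta(\pi)=\Theta(\enc{N_{v_{0}}^{\ast}})=\textit{cl}(\mathbb{K}(N_{v_{0}}))=\textit{cl}(\mathbb{K}(M'))=\Theta(\enc{M'^{\ast}})=\Theta(p(\pi)),
\end{equation*}
as required; and $\pi$ was produced by a uniform effective procedure from the machine for $p$.

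The main obstacle I anticipate is bookkeeping around the fact that $\Theta$ is not injective and that theories must lie in the restricted recursive set $\textit{dom}(\Theta)=\{\enc{M^{\ast}}\}$ rather than ranging over all machine codes: the classical recursion theorem moves freely among arbitrary indices, but here every object in sight must be of the form $\enc{(\cdot)^{\ast}}$, so I must check at each step that the strings I manipulate are genuine theories and that passing from $\mathbb{K}$-equality to $\textit{cl}(\mathbb{K})$-equality (hence to $\Theta$-equality) is legitimate — which is exactly what Lemma~\ref{lem:star_kb} guarantees. A secondary point to verify carefully is the uniformity/effectiveness of $v\mapsto\enc{N_{v}}$ and of inverting $\enc{\cdot}$ on $\textit{ran}(p)$; both follow from the standing assumption that $\enc{\cdot}$ and its inverse are effectively computable and from the effectiveness of $(\cdot)^{\ast}$, but they should be stated explicitly so that ``effectively find'' in the theorem statement is justified. (Lemma~\ref{lem:padding} is not strictly needed for this argument, though it could be used to massage $\pi$ into any of infinitely many equivalent theories if desired.)
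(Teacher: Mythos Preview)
Your overall strategy---transfer the classical Kleene argument through the star operation and invoke Lemma~\ref{lem:star_kb} at the end---is exactly the paper's approach, and your final chain of equalities is the right one. The problem is the construction of the family $\{N_v\}$.

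You define $N_v$ so that its \emph{first} action is to compute $w(v):=\enc{N_v^{\ast}}$. That is circular: the code $\enc{N_v}$ (and hence $\enc{N_v^{\ast}}$) depends on what $N_v$ does, and what $N_v$ does depends on the algorithm for $w$, which in turn depends on all the codes $\enc{N_v}$. Your parenthetical remark that ``the map $v\mapsto\enc{N_v}$ is primitive recursive'' presupposes that the family has already been defined, so it cannot be used to justify the definition itself. The subsequent ``diagonal substitution'' does not break the circularity either: you assert the existence of $v_0$ ``the description of $N$ with $v$ instantiated to its own description,'' but no theorem you cite yields such a $v_0$, and the phrase has no clear meaning (Kleene gives $\varphi_e=\varphi_{f(e)}$, not $e=f(e)$). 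In fact the parameter $v$ plays no genuine role in your argument, since nothing after step~(1) uses $v$ except through $w(v)$.

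The paper avoids this by the standard Kleene device: its machine $M_\kappa$ does \emph{not} attempt to compute its own $\Theta$-index, but instead computes $\varphi_\kappa(\kappa)$---the diagonal of whatever partial function $\kappa$ encodes---and then runs the universal acceptor $N_\Theta$ on the result. This is non-circular for every $\kappa$, so the map $q:\kappa\mapsto\enc{M_\kappa^{\ast}}$ is well defined and recursive. Only afterwards does one pick $\upsilon$ to be an index of the total function $p\circ q$; then $\varphi_\upsilon(\upsilon)=p(q(\upsilon))=p(\enc{M_\upsilon^{\ast}})$, and $\pi:=q(\upsilon)$ is the fixed point. To repair your argument, either redo the construction this way, or explicitly invoke the ordinary Kleene recursion theorem for acceptors as a black box (in which case the parameter $v$ can be dropped entirely). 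A minor additional point: there is no need to ``recover $M'$'' from $\enc{M'^{\ast}}$---simply simulate $N_\Theta$ on $\enc{p(\cdot),D,\phi}$ directly, as the paper does; this sidesteps any question about inverting $(\cdot)^{\ast}$.
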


\begin{proof}
Let $N_\Theta$ be a Turing machine that accepts $\enc{\pi,D,\phi}$ iff $(D,\phi)\in\Theta(\pi)$ for all $\pi\in\textit{dom}(\Theta)$, $D\in\mathscr{D}$ and $\phi\in\mathscr{Q}$.
Let $\mathcal{M}$ be a set consisting of $\enc{M}$ for all Turing machines $M$. Take $\kappa\in\mathcal{M}$ arbitrarily. We use $\varphi_\kappa$ to denote the function computed by the Turing machine encoded by $\kappa$. Now let us construct a Turing machine $M_\kappa$ which works as follows:
\begin{quote}
Given any input $\enc{D,\phi}$, first try to simulate the computation of $\varphi_\kappa$ on $\kappa$. If it halts with an output $\omega$, then simulate the computation of $N_\Theta$ on $\enc{\omega,D,\phi}$.
\end{quote} 
Let $M_\kappa^\ast$ be the Turing machine obtained from $M_\kappa$ by implementing Procedure~\ref{alg:algorithm}. Thus, we have 
$$\Theta(\enc{M^\ast_\kappa})=\left\{
\begin{aligned}
&\Theta(\varphi_\kappa(\kappa))\quad&&\text{if }\varphi_\kappa(\kappa)\text{ is defined;}\\
&\textit{cl}(\emptyset)&&\text{otherwise.}
\end{aligned}
\right.$$

Let $q$ be a mapping that maps $\kappa$ to $\enc{M_\kappa^\ast}$. Clearly, $q$ is recursive, which implies that $p\circ q$ is a recursive function from $\mathcal{M}$ to $\textit{dom}(\Theta)$. Let $M$ be a Turing machine that computes $p\circ q$, and let $\upsilon=\enc{M}$. It is easy to see that $\upsilon$ can be effectively obtained from $p$. Since $\varphi_\upsilon=p\circ q$ is recursive, we know that $\varphi_\upsilon(\upsilon)$ is defined. It is easy to verify that
$$\Theta(q(\upsilon))=\Theta(\enc{M^\ast_\upsilon})=\Theta(\varphi_\upsilon(\upsilon))=\Theta(p(q(\upsilon))).$$ 
Let $\pi=q(\upsilon)$. Clearly, $\Theta(\pi)=\Theta(p(\pi))$, and $\pi$ can be effectively obtained from $\upsilon$. These compete the proof.
\end{proof}

To present the main theorem, some notions are needed.

\begin{defn}
Let $\Gamma$ and $\Gamma_0$ be KRFs over $(\mathscr{D},\mathscr{Q})$, We say $\Gamma$ and $\Gamma_0$ are {\em recursively isomorphic} if there is a recursive bijection $p:\textit{dom}(\Gamma)\rightarrow \textit{dom}(\Gamma_0)$ such that $\Gamma=\Gamma_0\circ p$.
\end{defn}

We are now in the position to establish the main theorem.

\begin{thm}\label{thm:main1}
All universal KRFs over $(\mathscr{D},\mathscr{Q})$ are recursively isomorphic.
\end{thm}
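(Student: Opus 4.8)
The plan is to reduce the statement to a single target: every universal KRF $\Gamma$ over $(\mathscr{D},\mathscr{Q})$ is recursively isomorphic to the canonical KRF $\Theta$. Since the composition and the inverse of recursive bijections are again recursive, ``recursively isomorphic'' is an equivalence relation, so this together with Theorem~\ref{thm:theta_univ} yields the theorem. To build a recursive isomorphism $\Gamma\cong\Theta$ I will produce two \emph{injective} recursive reductions, $f\colon\textit{dom}(\Gamma)\rightarrow\textit{dom}(\Theta)$ with $\Theta\circ f=\Gamma$ and $g\colon\textit{dom}(\Theta)\rightarrow\textit{dom}(\Gamma)$ with $\Gamma\circ g=\Theta$, and then interleave them by the effective Cantor--Schr\"oder--Bernstein (Myhill) back-and-forth: one processes $\textit{dom}(\Gamma)$ and $\textit{dom}(\Theta)$ alternately, at each stage extending the finite partial bijection built so far by chasing the partial-recursive inverse of $f$ or of $g$ until an unused value of the correct ``colour'' (i.e.\ representing the same KB) is reached. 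Because $f$ and $g$ are colour-preserving, the chase stays inside one fibre, so the resulting recursive bijection $h$ satisfies $\Theta\circ h=\Gamma$; this is verified by the standard case split according to whether a theory is eventually matched through $f$ or through $g$.

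The reduction $f$ is the easy half and is already implicit in the proof of Theorem~\ref{thm:theta_univ}. Given a theory $\tau$ of $\Gamma$, Property~3 of Definition~\ref{defn:krf} provides a single Turing machine accepting exactly the triples $\enc{\pi,D,\phi}$ with $(D,\phi)\in\Gamma(\pi)$; hard-wiring $\tau$ into it, together with a block of dummy states whose number depends injectively on $\tau$, yields a machine $M_\tau$ with $\mathbb{K}(M_\tau)=\Gamma(\tau)$. As $\Gamma(\tau)$ is already a KB, $\textit{cl}(\mathbb{K}(M_\tau))=\Gamma(\tau)$, so Lemma~\ref{lem:star_kb} gives $\Theta(\enc{M_\tau^\ast})=\Gamma(\tau)$; set $f(\tau):=\enc{M_\tau^\ast}$, which is recursive and injective. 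The same construction shows $\textit{ran}(\Gamma)$ consists of recursively enumerable KBs, hence $\textit{ran}(\Gamma)\subseteq\textit{ran}(\Theta)$ --- in fact equality, by universality of $\Gamma$ --- a fact used below.

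Building the injective $g$ is the crux. Universality of $\Gamma$ gives \emph{some} recursive reduction $q\colon\textit{dom}(\Theta)\rightarrow\textit{dom}(\Gamma)$ with $\Gamma\circ q=\Theta$, but $q$ is typically far from injective, so it must be ``spread out''. What is needed is a $\Gamma$-version of Lemma~\ref{lem:padding}: from a theory $\tau$ one can \emph{effectively} produce infinitely many distinct theories all sent by $\Gamma$ to $\Gamma(\tau)$. Granted this, $g$ is obtained by a routine stage construction: enumerate $\textit{dom}(\Theta)$ and let $g(\sigma)$ be the first padded variant of $q(\sigma)$ not already used, which exists since there are infinitely many such variants and only finitely many forbidden values. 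The qualitative part of the $\Gamma$-padding --- that every fibre $\Gamma^{-1}(K)$ with $K\in\textit{ran}(\Gamma)$ is infinite --- is forced by combining the recursiveness of $\textit{dom}(\Gamma)$ with a Rice-type property of $\Theta$: were $\Gamma^{-1}(K)$ finite, then $\Theta^{-1}(K)=q^{-1}(\Gamma^{-1}(K))$ would be a finite union of $q$-preimages of singletons, hence recursive; but $\Theta^{-1}(K)$ is non-recursive for every $K\in\textit{ran}(\Theta)$, which is exactly Rice's theorem for $\Theta$ and follows from the recursion theorem (Theorem~\ref{thm:recursion}) by the usual diagonal argument.

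The main obstacle I anticipate is precisely the step from ``every fibre of $\Gamma$ is infinite'' to the \emph{effective} padding invoked in the construction of $g$: the fibres $\Theta^{-1}(K)$ are not recursively enumerable, so one cannot simply search inside a fibre, and transporting $\Theta$'s padding forward along the non-injective $q$ could a priori collapse an infinite family to a finite one. The route I would take is to choose $f$ compatibly with the padding operator $\textit{pad}_\Theta$ furnished by Lemma~\ref{lem:padding}, so that $\textit{ran}(f)$ is closed under $\pi\mapsto\textit{pad}_\Theta(\pi,n)$; then $n\mapsto f^{-1}\!\bigl(\textit{pad}_\Theta(f(\tau),n)\bigr)$ is an effective, injective-in-$n$, colour-preserving padding for $\Gamma$ (colour-preservation being immediate from $\Theta\circ f=\Gamma$), and no appeal to $q$ is needed for padding at all. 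Establishing that the machines $M_\tau^\ast$ can indeed be defined in such a $\textit{pad}_\Theta$-closed way is the one genuinely technical point; the $M_\tau^\ast$-construction, the stage construction of $g$, and the back-and-forth yielding $h$ are otherwise bookkeeping.
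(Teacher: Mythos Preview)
Your overall architecture is correct and matches the paper: reduce to $\Gamma\cong\Theta$, produce injective recursive reductions in both directions, then run the effective Cantor--Schr\"oder--Bernstein back-and-forth (this is exactly the paper's Lemma~\ref{lem:injective2bijective}). Your construction of the injective $f\colon\Gamma\to\Theta$ via hard-wiring plus dummy states is fine, indeed slightly cleaner than the paper's staged use of Lemma~\ref{lem:padding} in Lemma~\ref{lem:gamma2theta}.

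The genuine gap is your route to the injective $g\colon\Theta\to\Gamma$. You correctly isolate the issue as effective $\Gamma$-padding, and your Rice-type argument that every $\Gamma$-fibre is infinite is sound. But the proposed fix---choose $f$ so that $\textit{ran}(f)$ is closed under $\textit{pad}_\Theta$ and then pull padding back along $f^{-1}$---is circular. Closure of $\textit{ran}(f)$ under $\textit{pad}_\Theta$ means that for every $\tau$ and $n$ there is $\tau'\in\textit{dom}(\Gamma)$ with $f(\tau')=\textit{pad}_\Theta(f(\tau),n)$; colour-preservation of $f$ forces $\Gamma(\tau')=\Gamma(\tau)$, and injectivity of $f$ together with distinctness of the padded codes forces the $\tau'$ to be pairwise distinct. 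So merely \emph{defining} such an $f$ already presupposes an effective, injective supply of same-colour $\Gamma$-theories---which is precisely the effective $\Gamma$-padding you are trying to derive. No dummy-state bookkeeping in the $M_\tau$ escapes this: the constraint that $M_{\tau'}$ accept $\Gamma(\tau')$ pins the colour, and closure under adding dummy states then forces the map $\tau\mapsto M_\tau$ to hit infinitely many $\tau'$ in each fibre in a computable way, which is what you do not yet have.

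The paper resolves this with a different mechanism that stays entirely on the $\Theta$ side, and this is the missing idea. Given the (non-injective) reduction $p\colon\Theta\to\Gamma$ and a finite forbidden set $F\subseteq\textit{dom}(\Gamma)$, one uses the recursion theorem (Theorem~\ref{thm:recursion}) on $\Theta$ to manufacture a $\Theta$-theory with a prescribed KB whose $p$-image avoids $F$. Concretely, define a recursive $r\colon\textit{dom}(\Theta)\to\textit{dom}(\Theta)$ so that $\Theta(r(\omega))$ equals the target KB when $p(\omega)\notin F$ and equals $\textit{cl}(\emptyset)$ otherwise; a fixed point $\omega$ (with $\Theta(r(\omega))=\Theta(\omega)$) then has $p(\omega)\notin F$ unless the target is itself $\textit{cl}(\emptyset)$, and that degenerate case is handled by a second fixed-point construction with $\mathscr{D}\times\mathscr{Q}$ in place of $\textit{cl}(\emptyset)$---this is exactly where the non-tautology in $\mathscr{Q}$ is needed, to separate the two KBs. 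Iterating, one builds the injective $\Theta\to\Gamma$ stage by stage (Lemma~\ref{lem:theta2gamma}). The point is that the self-reference lives in $\Theta$, where Theorem~\ref{thm:recursion} is available; nothing about $\Gamma$ beyond the bare reduction $p$ is ever used.
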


\begin{proof}[Sketched Proof]
It suffices to show that every universal KRF $\Gamma$ is recursively isomorphic to $\Theta$. This is achieved by adopting a method used in the proof of Rogers's isomorphism theorem~\cite{Rogers58}. The main challenge is to find canonical universal KRFs, which we have resolved in Theorem~\ref{thm:theta_univ}. The rest of the proof proceeds as follows:  We find an injective reduction $p$ from $\Gamma$ to $\Theta$ via Lemma~\ref{lem:padding}, and an injective reduction $q$ from $\Theta$ to $\Gamma$ via Theorem~\ref{thm:recursion}. With $p$ and $q$, we construct the desired recursive isomorphism from $\Gamma$ to $\Theta$.
\end{proof}

\section{Subrecursive KRFs}

In the last section, we focused on universal KRFs. However, KRFs with low complexity, called {\em subrecursive KRFs}, might be useful in practice. A natural question thus arises as to under what condition subrecursive KRFs are recursively isomorphic.
We first present a candidate one as follows. 

\begin{defn}
We say a KRF $\Gamma$ {\em admits the padding property} if for each theory $\pi$ of $\Gamma$, we can effectively find an infinite set $S_\pi\subseteq\textit{dom}(\Gamma)$ such that $\Gamma(\pi)=\Gamma(\omega)$ for all $\omega\in S_\pi$. 
\end{defn}

By Lemma~\ref{lem:padding}, the canonical KRF $\Theta$ admits the padding property. Actually, the property holds for almost all the natural expressive formalisms. The following is an example.

\begin{exm}
Every FO-sentence $\phi$ is logical equivalent to $\phi\wedge\phi$. The defined KBs are thus the same, which provides a way to effectively find theories specified to the same KB. Thus, $\Gamma_{\mathrm{FO}}$ (see Example~\ref{lem:foso2db}) admits the padding property.
\end{exm}

Clearly, all recursively isomorphic KRFs should have the same expressive power. We adopt a slightly stronger condition that requires the KRFs here to be intertranslatable.

\begin{defn}
A pair of KRFs over $(\mathscr{D},\mathscr{Q})$, $\Gamma$ and $\Gamma_0$, are {\em equally strong} if $\Gamma$ is reducible to $\Gamma_0$ and vice versa.
\end{defn}

The main result for subrecursive KRFs is as follows. The proof mirrors that of Theorem~\ref{thm:main1}, with the only difference being the use of the padding property instead of Theorem~\ref{thm:recursion}.

\begin{thm}
All equally strong KRFs over $(\mathscr{D},\mathscr{Q})$ that admit the padding property are recursively isomorphic.
\end{thm}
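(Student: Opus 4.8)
The plan is to mirror the proof of Theorem~\ref{thm:main1}, i.e.\ the Rogers-style isomorphism construction, using the padding property of \emph{both} formalisms where that proof used Lemma~\ref{lem:padding} for one direction and Theorem~\ref{thm:recursion} for the other. Fix equally strong KRFs $\Gamma,\Gamma_0$ over $(\mathscr{D},\mathscr{Q})$, both admitting the padding property, and write $A=\textit{dom}(\Gamma)$ and $B=\textit{dom}(\Gamma_0)$; both are recursive, and infinite unless empty (the empty case being trivial, since $\Gamma$ reducible to $\Gamma_0$ and vice versa forces them to be empty together and then the empty map is the isomorphism). Since $\Gamma$ and $\Gamma_0$ are equally strong there are recursive reductions $f_0\colon A\to B$ with $\Gamma=\Gamma_0\circ f_0$ and $g_0\colon B\to A$ with $\Gamma_0=\Gamma\circ g_0$.

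First I would upgrade $f_0$ and $g_0$ to \emph{injective} reductions by a padding argument. Enumerate $A$ effectively as $\pi_0,\pi_1,\dots$. Using the padding property of $\Gamma_0$, effectively produce for each $n$ an infinite set $S_{f_0(\pi_n)}\subseteq B$ every member of which induces the KB $\Gamma_0(f_0(\pi_n))=\Gamma(\pi_n)$, and let $f(\pi_n)$ be the first member of an effective enumeration of $S_{f_0(\pi_n)}$ that differs from the finitely many earlier values $f(\pi_0),\dots,f(\pi_{n-1})$; such a member exists since $S_{f_0(\pi_n)}$ is infinite. Then $f$ is recursive (its value at $\pi$ depends only on finitely many earlier values), injective by construction, and still satisfies $\Gamma=\Gamma_0\circ f$. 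Symmetrically, the padding property of $\Gamma$ yields an injective recursive reduction $g\colon B\to A$ with $\Gamma_0=\Gamma\circ g$. This step is precisely where the padding hypothesis does the job that Theorem~\ref{thm:recursion} did in the proof of Theorem~\ref{thm:main1} (which was needed there only because a general universal KRF need not admit padding).

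With the injective reductions $f$ and $g$ in hand, the remaining step is the effective Cantor--Schr\"oder--Bernstein back-and-forth, run exactly as in the proof of Theorem~\ref{thm:main1}, but carrying the invariant that the finite injective partial map $h_s\colon A\rightharpoonup B$ built so far is ``colour-preserving'', i.e.\ $\Gamma_0(h_s(\pi))=\Gamma(\pi)$ for $\pi\in\textit{dom}(h_s)$. Starting from $h_0=\emptyset$: at stage $2e$ force the $e$-th element $\pi$ of $A$ into $\textit{dom}(h)$ by setting $y:=f(\pi)$, while $y\in\textit{ran}(h_{2e})$ replacing $y$ by $f(h_{2e}^{-1}(y))$, and then adjoining $(\pi,y)$; at stage $2e+1$ symmetrically force the $e$-th element of $B$ into $\textit{ran}(h)$ using $g$. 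Because $f$ and $g$ are reductions and the empty map is trivially colour-preserving, each $h_s$ and hence $h=\bigcup_s h_s$ is colour-preserving; by the forth/back alternation $h$ is a total bijection $A\to B$, so $\Gamma=\Gamma_0\circ h$ exhibits the recursive isomorphism, provided $h$ is recursive.

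I expect the main obstacle to be the same delicate point as in Rogers's theorem (and already handled in the proof of Theorem~\ref{thm:main1}): showing the inner collision-resolution loop in each stage terminates, which is what makes $h$ recursive rather than merely bijective. The key observation is that the successive targets $x_1=h_{2e}^{-1}(f(\pi)),\,x_2=h_{2e}^{-1}(f(x_1)),\dots$ produced in stage $2e$ lie in the finite set $\textit{dom}(h_{2e})$ and cannot repeat, since a repetition would, propagating backwards and using injectivity of $f$, force $f(\pi)=f(x_k)$ for some $x_k\in\textit{dom}(h_{2e})$, hence $\pi\in\textit{dom}(h_{2e})$, contrary to the case hypothesis; the stage-$(2e+1)$ loop is handled the same way with $g$ in place of $f$. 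Everything else --- injectivity, totality, surjectivity, preservation of the induced KB, and effectiveness of the auxiliary enumerations (of $A$, $B$, the padding sets, and inverses of the finite maps $h_s$) --- is routine once this is in place.
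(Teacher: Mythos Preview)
Your proposal is correct and matches the paper's own approach: upgrade the given reductions to injective ones via padding on each side (this replaces the use of Theorem~\ref{thm:recursion} in Lemma~\ref{lem:theta2gamma}), then run the effective Cantor--Schr\"oder--Bernstein back-and-forth exactly as in Lemma~\ref{lem:injective2bijective}. Your termination argument for the collision-resolution loop and the colour-preservation invariant are the right points to check, and your handling of them is sound.
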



\section{Conclusions and Related Work}

A general framework for studying KRFs has been proposed. Within the framework, all universal KRFs (respectively, all pairwise intertranslatable KRFs that admit the padding property) have been proven to be recursively isomorphic. 

Regarding the ``declarative vs. procedural controversy", our findings indicate that equally expressive natural KRFs exhibit similar performance such as reasoning efficiency. Moreover, no declarative KRF is linearly more succinct than the procedural KRF $\Theta$. Thus, labeling a KRF as declarative or procedural becomes less meaningful. Instead, declarativeness is better understood as a characteristic of specific representations rather than of the formalisms themselves.

For the debate between symbolic AI and connectionist AI, the existence of recursive isomorphisms between KRFs implies that for any knowledge operator (e.g., gradient descent) in a KRF we can effectively find an operator in another KRF to perform the same transformation. From a theoretical perspective, all these representation methodologies either pave the way to AGI or none, with core challenges being universal and advancements in one methodology benefiting others.

A closely related work is Rogers' isomorphism theorem, which states that all G\"{o}del numberings are recursively isomorphic~\cite{Rogers58}. This theorem plays a crucial role in computability theory. The proof of Theorem~\ref{thm:main1} involves an argument similar to that in~\cite{Rogers58}, but the challenges we encounter are quite different. While the existence of G\"{o}del numberings is readily apparent, establishing the existence of universal KRFs presents significant difficulties.

\newpage

\section*{Acknowledgements}

We would like to thank Professor Fangzhen Lin and anonymous referees for their helpful comments and suggestions. This work was supported by the Leading Innovation and Entrepreneurship Team of Zhejiang Province of China (Grant No. 2023R01008) and the Key Research and Development Program of Zhejiang, China (Grant No. 2024SSYS0012).

\bibliography{aaai25}

\comment{

\onecolumn

\newpage

\appendix
\section{Appendix: Detailed Proofs}

\bigskip
\subsection{Proofs in Section ``Framework"}

%
%

{\noindent\bf Proposition~\ref{prop:bm2kb}.} 
Let $\sigma$ be a signature such that $\sigma_D\cup\sigma_Q\subseteq\sigma$, and $\mathbb{M}$ be a belief mapping of $(\sigma_D,\sigma)$. Then $\textit{kb}(\mathbb{M},\mathscr{D},\mathscr{Q})$ is a KB over $(\mathscr{D},\mathscr{Q})$.

\begin{proof}
It is sufficient to prove that $\textit{kb}(\mathbb{M},\mathscr{D},\mathscr{Q})$ admits all of Properties 1-5 of Definition~\ref{defn:kb}.
Proofs of Properties 1-3 are trivial. Property 4 of Definition~\ref{defn:kb} follows from the third condition of Definition~\ref{defn:sm}, and Property 5 of Definition~\ref{defn:kb} follows from the second condition of Definition~\ref{defn:sm}. 
\end{proof}

{\noindent\bf Proposition~\ref{prop:kb_logic_impl}.} 
Let $K$ be a KB over $(\mathscr{D},\mathscr{Q})$, and $\sigma$ be the set consisting of all CWA-predicate symbols in $\sigma_D$. Then there are a set $\Sigma$ of FO-sentences such that, for all $D\in\mathscr{D}$ and $\phi\in\mathscr{Q}$, $\textit{Mod}^u_m(D,\Sigma,\sigma)\models\phi$ iff $(D,\phi)\in K$. 

\medskip

Before presenting the proof of this proposition, we first prove the compactness holds for first-order logic which satisfies the unique name assumption. Given a set $\Sigma$ of FO-sentences and an FO-sentence $\phi$, we write $\Sigma\vDash_{u}\phi$ if for all UNA-structures $\mathcal{A}$ we have $\mathcal{A}\models\phi$ if $\mathcal{A}\models\psi$ for all $\psi\in\Sigma$.
\smallskip

\begin{thm}\label{thm:compactness_una}
Let $\Sigma$ be a set of FO-sentences and $\phi$ an FO-sentence $\phi$ such that $\Sigma\vDash_u\phi$. Then there exists a finite subset $\Sigma_0$ of $\Sigma$ such that $\Sigma_0\vDash_u\phi$.
\end{thm}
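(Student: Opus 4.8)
The plan is to reduce UNA-consequence to ordinary first-order consequence by internalising the unique name assumption as a set of axioms, and then to invoke the classical compactness theorem. Fix a signature $\sigma$ containing every symbol that occurs in $\Sigma\cup\{\phi\}$, and let $\mathrm{UNA}$ be the set of all sentences $c\ne d$ with $c,d$ distinct constants of $\sigma$. By definition a structure is a UNA-structure (for $\sigma$) exactly when it models $\mathrm{UNA}$; hence, for every set $\Sigma'$ of $\sigma$-sentences and every $\sigma$-sentence $\psi$, we have $\Sigma'\vDash_u\psi$ if and only if $\Sigma'\cup\mathrm{UNA}\vDash\psi$ in the classical sense.

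Applying this to the hypothesis, $\Sigma\vDash_u\phi$ gives $\Sigma\cup\mathrm{UNA}\vDash\phi$. I would then invoke the ordinary compactness theorem for first-order logic --- which holds for signatures of arbitrary cardinality, in particular ours, whose constants include the countably infinite set $\Delta$ --- to obtain a finite $\Xi\subseteq\Sigma\cup\mathrm{UNA}$ with $\Xi\vDash\phi$. Split $\Xi$ as $\Sigma_0\cup U_0$, where $\Sigma_0:=\Xi\cap\Sigma$ is a finite subset of $\Sigma$ and $U_0:=\Xi\cap\mathrm{UNA}$ is a finite set of inequality axioms.

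Finally I would discard $U_0$: since $U_0\subseteq\mathrm{UNA}$ we have $\Sigma_0\cup\mathrm{UNA}\supseteq\Xi$, so $\Sigma_0\cup\mathrm{UNA}\vDash\phi$, which by the equivalence established in the first step is precisely $\Sigma_0\vDash_u\phi$. As $\Sigma_0$ is a finite subset of $\Sigma$, this is exactly what is claimed.

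There is no genuine obstacle here beyond getting the reduction exactly right: one must check that being a UNA-structure is captured by the (infinite, first-order) theory $\mathrm{UNA}$, and recall that classical compactness imposes no cardinality restriction on the signature. An alternative, more self-contained route is to run the ultraproduct proof of compactness and observe that an ultraproduct of UNA-structures is again a UNA-structure, since for distinct constants $c,d$ the index set $\{i:c^{\mathcal{A}_i}=d^{\mathcal{A}_i}\}$ is empty and hence not in the ultrafilter; but the reduction above is the shorter path and keeps the argument close to textbook compactness.
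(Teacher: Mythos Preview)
Your proof is correct and takes a genuinely different route from the paper's. The paper runs the ultraproduct proof of compactness directly for UNA-structures: it invokes the standard fact (Corollary 4.1.11 in Chang--Keisler) that an ultraproduct indexed by the finite subsets of a theory can be made into a model of the whole theory, and then verifies separately that an ultraproduct of UNA-structures is again a UNA-structure (because for distinct constants $c,d$ the set $\{i:c^{\mathcal{A}_i}=d^{\mathcal{A}_i}\}$ is empty and hence not in the ultrafilter). You instead internalise UNA as the first-order theory $\mathrm{UNA}=\{c\ne d:c,d\text{ distinct constants}\}$ and reduce to ordinary compactness in one stroke. Your route is shorter and keeps ultraproducts out of sight; the paper's route is more self-contained in that it does not quote compactness as a black box, but otherwise the two arguments are equivalent in strength---indeed, the ``alternative, more self-contained route'' you sketch in your final paragraph is precisely what the paper does.
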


\begin{proof}
To establish this, we will invoke Corollary 4.1.11 from~\cite{CK1990}, which is stated as follows:
\medskip

{\noindent\em Claim 1}~\cite{CK1990}. 
Let $\Phi$ be a set of sentences of a signature $\sigma$, let $I$ be the set of all finite subsets of $\Phi$, and for each $i\in I$, let $\mathcal{A}_i$ be a model of $i$. Then there exists an ultrafilter $D$ over $I$ such that the ultraproduct $\prod_{D}\mathcal{A}_i$ is a model of $\Sigma$.
\medskip

For details of notions and notations about ultraproducts, please refer to Chapter 4 of~\cite{CK1990}.\medskip

To make Claim 1 applicable to first-order logic under the unique name assumption, we prove a lemma as follows:
\medskip

{\noindent\em Claim 2.} If $(\mathcal{A}_i)_{i\in I}$ are UNA-structures and $D$ a ultrafilter over $I$, then $\prod_{D}\mathcal{A}_i$ is also a UNA-structure. 

\begin{proof}
Let $\mathcal{B}:=\prod_{D}\mathcal{A}_i$. Take $c,d$ as an arbitrary pair of distinct constants in $\sigma$. It suffices to prove that $c^{\mathcal{B}}\ne d^{\mathcal{B}}$. Clearly,
$$c^{\mathcal{B}}=\langle c^{\mathcal{A}_i}:i\in I\rangle_D\text{\quad and\quad}d^{\mathcal{B}}=\langle d^{\mathcal{A}_i}:i\in I\rangle_D.$$
Since $(\mathcal{A}_i)_{i\in I}$ are UNA-structures, we conclude that $\{i\in I:c^{\mathcal{A}_i}=d^{\mathcal{A}_i}\}=\emptyset$. According to the assumption, $D$ is an ultrafilter, we know that $\emptyset\not\in D$, which implies that $c^{\mathcal{B}}\ne d^{\mathcal{B}}$ as desired.
\end{proof}

Now let us show the desired theorem. Towards a contradiction, assume $\Sigma_0\not\vDash_u\phi$ for every subset set $\Sigma_0$ of $\Sigma$. Let $\Phi:=\Sigma\cup\{\neg\phi\}$ and $\Phi_0:=\Sigma_0\cup\{\neg\phi\}$. Then, $\Phi_0$ must be satisfied by some UNA-structure of $\sigma$. Let $\mathcal{A}_{\Phi_0}$ denote such a structure. Let $I$ denote the set of all finite subsets of $\Phi$. By Claim 1, we know that there is an ultrafilter $D$ over $I$ such that $\prod_D\mathcal{A}_i$ is a model of $\Phi$. According to Claim 2, $\prod_D\mathcal{A}_i$ is a UNA-structure. Consequently, we have $\Sigma\not\vDash_u\phi$, a contradiction as desired.
\end{proof}

Now we are able to prove Proposition~\ref{prop:kb_logic_impl}.

\begin{proof}[Proof of Proposition~\ref{prop:kb_logic_impl}]
Some notations are needed. Let $D\in\mathscr{D}$. For $t\in\{0,1\}$, let $A_t$ denote the set of all atoms $\alpha\in\textit{dom}(D)$ such that $D(\alpha)=t$. Let $\theta_{D}$ be a conjunction of all sentences in $A_1\cup\{\neg\alpha:\alpha\in A_0\}$. 
For each constant $c\in\Delta$, we introduce $v_c$ as a fresh variable. For each pair $(D,\phi)\in K$, let
$$\gamma_{D,\phi}:=\forall\bar{v}(\theta_D^v\wedge\textit{PDst}(\bar{v})\rightarrow\phi^v)$$
where $\bar{v}$ denotes a tuple that consists of all the variables appearing in $\theta_D^v$, $\textit{PDst}(\bar{v})$ is a formula asserting variables in $\bar{v}$ are pairwise distinct, and for $\psi\in\{\theta_D,\phi\}$, $\psi^v$ denotes the formula obtained from $\psi$ by substituting $v_c$ for every occurrence of each $c\in\Delta$. Let $\Sigma$ be the set that consists of $\gamma_{D,\phi}$ for all pairs $(D,\phi)\in K$. Now, it remains to prove that, for all $D\in\mathscr{D}$ and $\phi\in\mathscr{Q}$, $\textit{Mod}^u_m(D,\Sigma,\sigma)\models\phi$ iff $(D,\phi)\in K$. 

First, let us consider the ``if" direction. Suppose $(D,\phi)\in K$. Let $\mathcal{A}$ be a UNA-structure of $\sigma_D\cup\sigma_Q$ that is a $\subseteq_\sigma$-minimal model of both $D$ and $\Sigma$. We need to prove that $\mathcal{A}$ is a model of $\phi$. Let $s$ be an assignment in $\mathcal{A}$ that maps $v_c$ to $c^{\mathcal{A}}$ for all $c\in\Delta$. As $\mathcal{A}$ satisfies the unique name assumption, we know that $s$ is injective, which implies $\mathcal{A}\models\textit{PDst}(\bar{v})[s]$. From $\mathcal{A}$ is a model of $D$, we also have $\mathcal{A}\models\theta_{D}^v[s]$. By the construction of $\Sigma$ we know $\gamma_{D,\phi}\in\Sigma$, which implies that $\mathcal{A}$ is a model of $\gamma_{D,\phi}$.  Consequently, it holds that $\mathcal{A}\models\phi^v[s]$. The latter is equivalent to $\mathcal{A}\models\phi$ that we need to prove.

Next, let us consider the ``only-if" direction. Assume $\textit{Mod}^u_m(D,\Sigma,\sigma)\models\phi$. Our task is to prove $(D,\phi)\in K$. It is trivial for the case where $\phi$ is a tautology. For the case where $\phi$ is not a tautology, let $\Psi\subseteq\mathscr{Q}$ denote the set of all sentences $\psi\in\mathscr{Q}$ which satisfy the following property: 
\begin{quote}
There exists at least one database $D_0\in\mathscr{D}$ such that $D$ extends $D_0$ and $(D_0,\psi)\in K$. 
\end{quote}
We need to prove $\Psi\vDash_u\phi$. Let $\mathcal{A}$ be a UNA-structure of $\sigma_Q$ that satisfies $\Psi$, and $\mathcal{B}$ a UNA-structure of $\sigma_D\cup\sigma_Q$ that is both a $\subseteq_\sigma$-minimal model of $D$ and an expansion of $\mathcal{A}$. Such a model always exists because $\sigma_D\cap\sigma_Q$ contains nothing but constants, and no predicate symbol in $\sigma_Q$ appears in $\sigma$. We need to prove $\mathcal{B}\models\Sigma$. 

Take $\gamma_{D_0,\psi}\in\Sigma$ arbitrarily. If there exists no injective assignment $s$ in $\mathcal{B}$ such that $\mathcal{B}\models\theta_{D_0}^v[s]$, then $\mathcal{B}$ is trivially a model of $\gamma_{D_0,\psi}$. Otherwise, let $s$ be any of such assignments. Let $\tau$ be a mapping that maps each $c\in\Delta$ to $s(v_c)$. It is not difficult to verify that $\tau$ is injective and $D$ extends $\tau(D_0)$. By the construction of $\Sigma$ and $\gamma_{D_0,\psi}\in\Sigma$, we know $(D_0,\psi)\in K$. Since $\mathscr{Q}$ is closed under constant renaming, it must be true that $\tau(\psi)\in\mathscr{Q}$. According to Property 5 of Definition~\ref{defn:kb}, we thus have $(\tau(D_0),\tau(\psi))\in K$, and consequently, $\tau(\psi)\in\Psi$. This implies that $\mathcal{B}$ is a model of $\tau(\psi)$, or equivalently, $\mathcal{B}\models\psi[s]$. As a consequence, $\mathcal{B}$ is also a model of $\gamma_{D_0,\psi}$ in this case. By the arbitrariness of $\gamma_{D_0,\psi}$, we then conclude that $\mathcal{B}$ is a model of $\Sigma$. By definition, it is easy to see that $\mathcal{B}|_{\sigma_Q}=\mathcal{A}$. Consequently, we have $\mathcal{A}\models\phi$. This thus proves $\Psi\vDash_u\phi$ that we need.

According to Theorem~\ref{thm:compactness_una}, there is a finite subset $\Psi_0$ of $\Psi$ such that $\Psi_0\vDash_u\phi$. For each $\psi\in\Psi_0$, by Property 4 of Definition~\ref{defn:kb}, we conclude $(D,\psi)\in K$. Let $\chi$ denote the conjunction of all sentences $\psi\in\Psi_0$. As $\mathscr{Q}$ is closed under conjunctions, it must be true that $\chi\in\mathscr{Q}$. By applying Property 3 of Definition~\ref{defn:kb} a finite number of times, we thus have $(D,\chi)\in K$. It is also clear that $\chi\vDash\phi$. According to Property 2 of Definition~\ref{defn:kb}, we then obtain $(D,\phi)\in K$, which completes the proof.
\end{proof}

\subsection{Proofs in Section ``Universal KRFs"}

\medskip
{\noindent\bf Lemma~\ref{lem:star_kb}.} 
$\mathbb{K}(M^\ast)=\textit{cl}(\mathbb{K}(M))$ for every Turing machine $M$.

\begin{proof}
For convenience, we say a task in $T$ is {\em render} as {\bf true} if the flag in $F$ corresponding to the task is set to {\bf true}.

We first consider the direction of ``$\subseteq$". Let $(D,\phi)\in\mathbb{K}(M^\ast)$. Now our task is to prove that $(D,\phi)\in\textit{cl}(\mathbb{K}(M))$. By definition, $M^\ast$ will accept the input $\enc{D,\phi}$. Let $N(D,\phi)$ denote the number of iterations of the {\bf while}-loop (see Lines 14-16 of Procedure~\ref{alg:rqa}) before the task in which $(M,D,\phi)$ appears is rendered as {\bf true}. 

Now we prove the desired conclusion by a routine induction on $N(D,\phi)$. If $N(D,\phi)=0$, we have $(D,\phi)\in\mathbb{K}(M)$, which yields $(D,\phi)\in\textit{cl}(\mathbb{K}(M))$ immediately. For the case where $N(D,\phi)>0$, assume as the inductive hypothesis that 
$$
\left\{
\begin{aligned}
&(D_0,\psi)\in\mathscr{D}\times\mathscr{Q}\,\,\&\\
&N(D_0,\psi)<N(D,\phi)
\end{aligned}
\right\} \Longrightarrow(D_0,\psi)\in\textit{cl}(\mathbb{K}(M)).
$$
We need to prove $(D,\phi)\in\textit{cl}(\mathbb{K}(M))$. By Procedure~\ref{alg:rqa}, it is easy to see that at least one of the following cases must be true:
\begin{enumerate}
\item $(M,D,\phi)$ has a child task $(M_e,\top,\phi)$ rendered as {\bf true};
\item $(M,D,\phi)$ has a child task $\langle(M_e,\psi,\phi),(M,D,\psi)\rangle$ rendered as {\bf true} such that $\psi\in\mathscr{Q}$ and $\psi\vDash\phi$;
\item $(M,D,\phi)$ has child tasks $(M,D,\psi)$ and $(M,D,\chi)$ rendered as {\bf true} such that $\psi,\chi\in\mathscr{Q}$ and $\phi=\psi\wedge\chi$;
\item $(M,D,\phi)$ has a child task $(M,D_0,\phi)$ rendered as {\bf true} such that $D_0\in\mathscr{D}$ and $D_0$ extends $D$;
\item $(M,D,\phi)$ has a child task $(M,\tau(D),\tau(\phi))$ rendered as {\bf true} such that $\tau:\Delta\rightarrow\Delta$ is injective.
\end{enumerate}

We only consider Cases 2 and 3. Proofs for other cases are similar. For Case 2, it is clear that $N(D,\phi)=N(D,\psi)+1$. By the inductive hypothesis, we thus have $(D,\psi)\in\textit{cl}(\mathbb{K}(M))$. Since $\psi\vDash\phi$, by the definition of $\textit{cl}(\cdot)$ we obtain $(D,\phi)\in\textit{cl}(\mathbb{K}(M))$ immediately.
For Case 3, it is also easy to see that both $N(D,\psi)<N(D,\phi)$ and $N(D,\chi)<N(D,\phi)$ hold. Applying the inductive hypothesis, we then have both $(D,\psi)\in\textit{cl}(\mathbb{K}(M))$ and $(D,\chi)\in\textit{cl}(\mathbb{K}(M))$. Since $\phi=\psi\wedge\chi$, by the definition of $\textit{cl}(\cdot)$ we obtain $(D,\phi)\in\textit{cl}(\mathbb{K}(M))$ immediately.

Next, we prove the direction of ``$\supseteq$". Suppose $(D,\phi)\in\textit{cl}(\mathbb{K}(M))$. By definition, there are a sequence $S$ of pairs $$(D_0,\phi_0),(D_1,\phi_1),\dots,(D_n,\phi_n)$$
such that $D=D_n$ and $\phi=\phi_n$ and for each $i\in\{0,1,\dots,n\}$, at least one of the following cases must be true:
\begin{enumerate}
\item $M$ accepts $\enc{D_i,\phi_i}$;
\item $\phi_i$ is a tautology;
\item there is an integer $j\in\{0,\dots,i-1\}$ such that $D_i=D_j$ and $\phi_j\vDash\phi_i$;
\item there are a pair of integers $j,k\in\{0,\dots,i-1\}$ such that $D_i=D_j=D_k$ and $\phi_i=\phi_j\wedge\phi_k$;
\item there is an integer $j\in\{0,\dots,i-1\}$ such that $D_j$ extends $D_i$ and $\phi_j=\phi_i$;
\item there is an integer $j\in\{0,\dots,i-1\}$ and an injection $\tau:\Delta\rightarrow\Delta$ such that $D_j=\tau(D_i)$ and $\phi_j=\tau(\phi_i)$.
\end{enumerate} 
The sequence $S$ thus provides a path for how $M^\ast$ halts and accepts $\enc{D,\phi}$, or more explicitly, how the truth values in $F$ are propagated. By Procedure~\ref{alg:rqa}, for every $i\in\{0,1,\dots,n\}$, there must be a task, denoted $t_i$, in the task array $T$ such that $t_i$ contains or is exactly the task $(M,D_i,\phi_i)$. Now we claim that each of such tasks will be rendered as {\bf true} in a finite time. 

We prove this by a routine induction on the length $n$ of $S$. For the case of $n=0$, $S$ consists of exactly one pair $(D,\phi)$, and we have that either
 $M$ accepts $\enc{D,\phi}$ or
 $\phi$ is a tautology. Thus, $t_0=(M,D,\phi)$ if the first case happens, and $t_0=(M_e,\top,\phi)$ otherwise.
It is easy to see that in either case performing $t_0$ can be done in a finite time. (Note that the validity of the query language $\mathscr{Q}$ is recursively enumerable.) According to Lines 11-16 of Procedure~\ref{alg:rqa}, $t_0$ will be rendered as {\bf true} in a finite time.

Let $n>0$. As the inductive hypothesis, we assume that, for every $i\in\{0,\dots,n-1\}$, the task $t_i$ has been rendered as {\bf true}. We need to show that the task $t_n$ will also be rendered as {\bf true}. Only consider Case 3, i.e., the case where $D=D_j$ and $\phi_j\vDash\phi$ for some $j\in\{0,\dots,n-1\}$. Proofs for other cases are similar, and we omit them here. For Case 3, it is easy to see that $t_j=\langle(M_e,\phi_j,\phi),(M,D_j,\phi_j)$ and $t_n$ is the parent task of $t_j$ according to Procedure~\ref{alg:rqa}. By the inductive hypothesis, the task $t_j$ will be rendered as {\bf true} in a finite time. Through the truth propagation of $F$ (see Lines 14-16 of Procedure~\ref{alg:rqa}), we can conclude that $t_n$ must also be rendered as {\bf true} in a finite time.  

With the above conclusion, we then know that the root task $(M,D,\phi)$ will be rendered as {\bf true} in a finite time, and $M^\ast$ thus accepts $\enc{D,\phi}$. This completes the proof.  
\end{proof}

{\noindent\bf Lemma~\ref{lem:padding}.} 
For every theory $\pi$ of $\Theta$, we can effectively find an infinite set $S_{\pi}$ of theories of $\Theta$ such that $\Theta(\pi)=\Theta(\omega)$ for all theories $\omega\in S_{\pi}$.

\begin{proof}
Let $\pi\in\textit{dom}(\Theta)$. According to the construction of $\Theta$, we know that there is a Turing machine $M$ such that $\pi=\enc{M^\ast}$. Suppose $s_0,s_1,\dots,s_n$ list all the states of $M$. We introduce $s_{n+1},s_{n+2},\dots$ as a countably infinite number of (pairwise distinct) fresh states. Take $k\ge 1$, and let $M_k$ be a Turing machine obtained from $M$ by adding the following instructions: $$\langle s_{n+1},B,s_{n+1},B,R\rangle,\dots, \langle s_{n+k},B,s_{n+k},B,R\rangle.$$
I.e., for $i\in\{1,\dots,k\}$, the $i$-th instruction above indicates that if the state of $M_k$ is $s_{n+i}$ and the symbol in the scanned cell is $B$, then both the state and the symbol in the scanned cell will not change, and the reading head will move right one cell.  
Let $\omega_k:=\enc{M_k^\ast}$. It is clear that $\Theta(\omega_k)=\Theta(\pi)$, and $\omega_k$ can be effectively obtained from $\pi$. It is also easy to see that, by employing a standard encoding technique,  $\omega_1,\omega_2,\dots$ are pairwise distinct. These thus yield the lemma.
\end{proof}

{\noindent\bf Theorem~\ref{thm:main1}.} 
All the universal KRFs over $(\mathscr{D},\mathscr{Q})$ are recursively isomorphic.
\medskip

Let $\Gamma$ be an arbitrary universal KRF over $(\mathscr{D},\mathscr{Q})$. To prove the above theorem, it suffices to prove that $\Gamma$ and the canonical KRF $\Theta$ are recursively isomorphic. Before presenting the proof, we first prove some lemmas.

\begin{lem}\label{lem:gamma2theta}
There is an injective reduction from $\Gamma$ to $\Theta$.
\end{lem}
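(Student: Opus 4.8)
The plan is to start from an arbitrary reduction and ``pad'' it into an injective one, following the standard first step of Rogers's isomorphism argument. Since $\Theta$ is universal (Theorem~\ref{thm:theta_univ}) and $\Gamma$ is a KRF, there is a recursive reduction $r:\textit{dom}(\Gamma)\to\textit{dom}(\Theta)$ with $\Gamma=\Theta\circ r$. This $r$ need not be injective, so the task is to replace each value $r(\pi)$ by a suitably chosen theory of $\Theta$ that represents the same KB but is used only once. Note that only the fact that $\Gamma$ is a KRF (not its universality) is used here.

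First I would fix an effective enumeration $\pi_0,\pi_1,\dots$ of $\textit{dom}(\Gamma)$ without repetitions; this is possible since $\textit{dom}(\Gamma)$ is a recursive subset of $\{0,1\}^\ast$ (if it happens to be finite the argument is the same but shorter). I would then define $p$ by recursion on the index: assuming $p(\pi_0),\dots,p(\pi_{k-1})$ have already been defined, apply the padding lemma (Lemma~\ref{lem:padding}) to $r(\pi_k)$ to obtain an effective enumeration of an infinite set $S_{r(\pi_k)}\subseteq\textit{dom}(\Theta)$ with $\Theta(\omega)=\Theta(r(\pi_k))$ for every $\omega\in S_{r(\pi_k)}$, and let $p(\pi_k)$ be the first element of this enumeration that differs from each of $p(\pi_0),\dots,p(\pi_{k-1})$. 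Such an element exists because $S_{r(\pi_k)}$ is infinite while only finitely many values have been committed so far.

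It then remains to verify the three required properties. Injectivity is immediate from the construction. Recursiveness holds because, on input $\pi$, one can compute its index $k$ in the enumeration, then compute $r(\pi_0),\dots,r(\pi_k)$ and run the finite searches producing $p(\pi_0),\dots,p(\pi_k)$, all by effective procedures; hence $p$ is a recursive function from $\textit{dom}(\Gamma)$ to $\textit{dom}(\Theta)$. Finally $\Gamma=\Theta\circ p$: for every theory $\pi$ of $\Gamma$ we have $p(\pi)\in S_{r(\pi)}$, so $\Theta(p(\pi))=\Theta(r(\pi))=\Gamma(\pi)$.

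The argument is essentially bookkeeping; the real content is supplied by the padding lemma, which simultaneously guarantees that enough alternative theories exist and that they can be produced effectively. The point I would be most careful about is keeping effectiveness and injectivity compatible: the recursion must be organized so that computing $p(\pi)$ for a given $\pi$ ever requires only finitely many earlier values, which is exactly why I process theories along a fixed recursive enumeration rather than attempting a ``global'' definition of $p$.
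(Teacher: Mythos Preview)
Your proposal is correct and follows essentially the same approach as the paper: start from a reduction supplied by the universality of $\Theta$, enumerate $\textit{dom}(\Gamma)$, and at each stage use the padding lemma to pick a fresh $\Theta$-theory equivalent to the reduced image, thereby forcing injectivity. The only cosmetic difference is that the paper first tries the unmodified reduced value and pads only when a collision occurs, whereas you pad unconditionally; both variants work for the same reason.
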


\begin{proof}
Note that $\Theta$ is a universal KRF over $(\mathscr{D},\mathscr{Q})$. By definition, $\Gamma$ must be reducible to $\Theta$. Let $p$ be a reduction from $\Gamma$ to $\Theta$. We need to find an injective reduction $h$ from $\Gamma$ to $\Theta$. Suppose $\pi_1,\pi_2,\dots$ is an effective enumeration of all theories of $\Gamma$. Since $\textit{dom}(\Gamma)$ is recursive, such an enumeration always exists. We attempt to construct a sequence of mappings $h_i:\textit{dom}(\Gamma)\rightharpoonup \textit{dom}(\Theta)$, $i\in\mathbb{N}$ (where $\mathbb{N}$ denoted the set of natural numbers), such that, for all $i\in\mathbb{N}$, the following conditions hold:
\begin{enumerate}
\item $h_i$ is injective and $dom(h_i)=\{\pi_1,\pi_2\dots,\pi_i\}$;
\item $\Gamma(\pi)=\Theta(h_i(\pi))$ whenever $\pi\in\textit{dom}(h_i)$.
\end{enumerate} 

Let $h_0:=\emptyset$. Take $i\in\mathbb{N}$, and suppose $h_i$ is a mapping satisfying the above conditions. We define $h_{i+1}$ by cases as follows:
\begin{enumerate}
\item If $p(\pi_{i+1})\not\in\textit{ran}(h_i)$, then let 
$$h_{i+1}:=h_i\cup\{\pi_{i+1}\mapsto p(\pi_{i+1})\}.$$
It is easy to verify that $h_{i+1}$ satisfies Conditions 1-2, and $h_{i+1}$ is thus desired.
\item Otherwise, we must have $p(\pi_{i+1})\in\textit{ran}(h_i)$. By applying Lemma~\ref{lem:padding}, we can effectively find a sequence of distinct theories $\omega_1,\omega_2,\dots,\omega_{i+1}$ of $\Theta$ such that 
$$\Theta(p(\pi_{i+1}))=\Theta(\omega_0)=\Theta(\omega_1)=\cdots=\Theta(\omega_{i+1}).$$
Let $S:=\{\omega_1,\omega_2,\dots,\omega_{i+1}\}\setminus\textit{ran}(h_i)$. It is easy to see that $S\ne\emptyset$. Let $\omega_j$ be a theory in $S$ with the least index $j$, and let 
$$h_{i+1}:=h_i\cup\{\pi_{i+1}\mapsto \omega_{j}\}.$$
In this case, $h_{i+1}$ also satisfies Conditions 1-2.
\end{enumerate}
Let $h=\bigcup_{i\in\mathbb{N}}h_i$. One can easily verify that $h$ is an injective reduction from $\Gamma$ to $\Theta$, which completes the proof.
\end{proof}

\begin{lem}\label{lem:theta2gamma}
There is an injective reduction from $\Theta$ to $\Gamma$.
\end{lem}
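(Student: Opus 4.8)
The plan is to turn a given (not necessarily injective) reduction from $\Theta$ to $\Gamma$ into an injective one, by first showing that the arbitrary universal KRF $\Gamma$ must itself carry a padding function, and then running the stagewise construction of Lemma~\ref{lem:gamma2theta} with the two formalisms interchanged. First, since $\Gamma$ is universal and $\Theta$ is universal by Theorem~\ref{thm:theta_univ}, I fix recursive reductions $a\colon\textit{dom}(\Gamma)\to\textit{dom}(\Theta)$ and $g\colon\textit{dom}(\Theta)\to\textit{dom}(\Gamma)$, so that $\Gamma(x)=\Theta(a(x))$ for $x\in\textit{dom}(\Gamma)$ and $\Theta(\omega)=\Gamma(g(\omega))$ for $\omega\in\textit{dom}(\Theta)$. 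Conjugating the fixed-point construction of Theorem~\ref{thm:recursion} through $a$ and $g$ transports the recursion theorem to $\Gamma$: given a recursive $p\colon\textit{dom}(\Gamma)\to\textit{dom}(\Gamma)$, consider the recursive map $\widehat p\colon\textit{dom}(\Theta)\to\textit{dom}(\Theta)$ defined by $\widehat p(\omega):=a(p(g(\omega)))$, use Theorem~\ref{thm:recursion} to find $\omega_0$ with $\Theta(\widehat p(\omega_0))=\Theta(\omega_0)$, and observe that $\pi_0:=g(\omega_0)$ then satisfies $\Gamma(p(\pi_0))=\Theta(a(p(\pi_0)))=\Theta(\widehat p(\omega_0))=\Theta(\omega_0)=\Gamma(\pi_0)$; the same works with parameters.

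Next I would use this to show that $\Gamma$ is ``padding-rich''. Note $\textit{ran}(\Gamma)\supseteq\textit{ran}(\Theta)$, which is infinite by expressive completeness, so I may fix $\rho_0,\rho_1\in\textit{dom}(\Gamma)$ with $\Gamma(\rho_0)\ne\Gamma(\rho_1)$. Then each index set $\Gamma^{-1}(K)$ with $K\in\textit{ran}(\Gamma)$ is infinite: otherwise it is a finite, hence recursive, subset of $\textit{dom}(\Gamma)$, and the recursive map sending $z$ to $\rho_1$ when $\Gamma(z)=K$ and to $\rho_0$ otherwise is fixed-point free, contradicting the transported recursion theorem. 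The crucial step is then to produce, from any $\pi\in\textit{dom}(\Gamma)$ and any finite $X\subseteq\textit{dom}(\Gamma)$ with $\Gamma^{-1}(\Gamma(\pi))\not\subseteq X$, a fresh index $\pi'\in\textit{dom}(\Gamma)\setminus X$ with $\Gamma(\pi')=\Gamma(\pi)$. For a candidate $\xi\in\textit{dom}(\Gamma)$ let $n_\xi$ be the recursion-theorem fixed point of the recursive map returning $\pi$ off $X$ and $\xi$ on $X$; then $n_\xi\notin X$ already forces $\Gamma(n_\xi)=\Gamma(\pi)$, whereas $n_\xi\in X$ forces $\Gamma(\xi)\in\{\Gamma(z):z\in X\}$. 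Searching through ever longer initial segments of $\textit{dom}(\Gamma)$ for $\xi$ and testing the decidable condition $n_\xi\notin X$, a pigeonhole argument—eventually one sees more than $|X|$ pairwise distinct knowledge bases among the values $\Gamma(\xi)$—guarantees the search halts with a suitable $\pi'$. Iterating this yields a recursive padding function $h$ for $\Gamma$, with $\Gamma(h(\pi,i))=\Gamma(\pi)$ and $i\mapsto h(\pi,i)$ injective.

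With $\Gamma$ now playing the role that $\Theta$ played in Lemma~\ref{lem:gamma2theta}, I finish by mimicking that proof with the two formalisms swapped. Fix an effective enumeration $\theta_1,\theta_2,\dots$ of $\textit{dom}(\Theta)$, set $q(\theta_1):=g(\theta_1)$, and at stage $i+1$ let $q(\theta_{i+1}):=g(\theta_{i+1})$ if this value is not among $q(\theta_1),\dots,q(\theta_i)$, and otherwise $q(\theta_{i+1}):=h(g(\theta_{i+1}),j)$ for the least $j$ with $h(g(\theta_{i+1}),j)\notin\{q(\theta_1),\dots,q(\theta_i)\}$; such a $j$ exists since $\Gamma^{-1}(\Gamma(g(\theta_{i+1})))$ is infinite. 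Then $q$ is a recursive injection with $\Gamma\circ q=\Theta$, i.e.\ an injective reduction from $\Theta$ to $\Gamma$.

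The main obstacle is the middle step. A universal KRF is specified purely through a reducibility condition, so nothing in its definition promises any of the padding behaviour that $\Theta$ enjoys by construction, and one must extract an effective padding function for $\Gamma$ out of the recursion theorem alone. The delicate part is forcing a recursion-theorem fixed point to land outside a prescribed finite set: the fixed point cannot be controlled directly, so one has to feed the recursion theorem a map whose values on the forbidden set provably carry the ``wrong'' knowledge base, which in turn necessitates the search over candidate theories together with the pigeonhole count. Everything else—the transfer of the recursion theorem and the final stagewise construction—is bookkeeping modelled on the proof of Lemma~\ref{lem:gamma2theta}.
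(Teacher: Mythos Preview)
Your route is genuinely different from the paper's. The paper never transports the recursion theorem to $\Gamma$ and never builds a padding function for $\Gamma$; it stays inside $\Theta$. Given the reduction $p\colon\textit{dom}(\Theta)\to\textit{dom}(\Gamma)$ and $\Theta$-equivalent theories $\pi_0,\dots,\pi_k$, it builds a machine $M_\omega$ that first tests the decidable predicate $p(\omega)\notin\{p(\pi_0),\dots,p(\pi_k)\}$ and, if so, simulates $\Theta(\pi_0)$; the fixed point $\upsilon$ of $\omega\mapsto\enc{M_\omega^\ast}$ either has $p(\upsilon)$ fresh, or else forces $\Theta(\pi_0)=\textit{cl}(\emptyset)$, in which case a second fixed-point construction swapping $\textit{cl}(\emptyset)$ for $\mathscr{D}\times\mathscr{Q}$ succeeds. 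The paper thus needs only the two KBs $\textit{cl}(\emptyset)\ne\mathscr{D}\times\mathscr{Q}$, guaranteed by the single non-tautology in $\mathscr{Q}$. Your plan---first show $\Gamma$ itself carries padding, then rerun Lemma~\ref{lem:gamma2theta} with the roles swapped---is more modular and closer to the classical proof that every acceptable numbering admits an effective padding function.

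There is, however, a gap in your termination argument for the fresh-index search. Your pigeonhole step (``eventually one sees more than $|X|$ pairwise distinct knowledge bases among the $\Gamma(\xi)$'') presupposes that $\textit{ran}(\Gamma)$ is infinite, and ``infinite by expressive completeness'' is not justified by the framework: take $\sigma_D=\Delta$ (no predicate symbols, hence a single database) and let $\mathscr{Q}$ be the conjunction-closure of one constant-free non-tautology; then there are exactly two KBs. Once $\{\Gamma(z):z\in X\}$ already equals all of $\textit{ran}(\Gamma)$, the implication ``$n_\xi\in X\Rightarrow\Gamma(\xi)\in\{\Gamma(z):z\in X\}$'' is vacuous and nothing forces the search to halt. (A related slip: your Rice-style map in the ``index sets are infinite'' step is fixed-point free only if $\Gamma(\rho_0)=K$ and $\Gamma(\rho_1)\ne K$, so $\rho_0,\rho_1$ must be chosen depending on $K$, not once and for all.) The search does in fact terminate---if $n_\xi\in X$ for every $\xi$, then $\xi\mapsto n_\xi$ would reduce the universal $\Gamma$ to the finite-domain KRF $\Gamma|_X$, making every $\Theta$-index-set recursive---but that is precisely the extra argument your write-up omits and that the paper's two-KB case split supplies for free.
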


\begin{proof}
We prove this by implementing a construction similar to that for Lemma~\ref{lem:gamma2theta}. The main difficulty here is that the padding lemma may not hold for $\Gamma$. To overcome the difficulty, we use the method in~\cite{Rogers58}, by employing an effective version of the recursion theorem for the canonical KRF $\Theta$, i.e., Theorem~\ref{thm:recursion}. 

Let $p$ be a reduction from $\Theta$ to $\Gamma$. Since $\Gamma$ is a universal KRF over $(\mathscr{D},\mathscr{Q})$, by definition such a reduction always exists. To make the proof of Lemma~\ref{lem:gamma2theta} works here, it suffices to devise an effective procedure to solve the following problem: Given any finite sequence of theories $\pi_0,\pi_1,\dots,\pi_k$ of $\Theta$ such that 
$$\Theta(\pi_0)=\Theta(\pi_1)=\cdots=\Theta(\pi_k),$$
find a theory $\pi_{k+1}$ that satisfies the following conditions: 
\begin{enumerate}
\item $\Theta(\pi_{k+1})=\Theta(\pi_0)$ and 
\item $p(\pi_{k+1})\not\in\{p(\pi_0),p(\pi_1),\dots,p(\pi_k)\}$.
\end{enumerate}

Now let us present the desired procedure. For any theory $\omega$ of $\Theta$, we construct a Turing machine $M_\omega$ to implement the following computation:
\begin{quote}
Given any $\enc{D,\phi}$ (where $D\in\mathscr{D}$ and $\phi\in\mathscr{Q}$) as input, first check whether $p(\omega)\not\in\{p(\pi_0),p(\pi_1),\dots,p(\pi_k)\}$. If this is true, then check whether $(D,\phi)\in\Theta(\pi_0)$; otherwise, never stop.
\end{quote}
By the definition of KRF, we know that there is a Turing machine to determine whether $(D,\phi)\in\Theta(\pi_0)$ for any proper $D,\phi$ and $\pi_0$. As $p$ is recursive, there is also a Turing machine to check whether $p(\pi_{k+1})\not\in\{p(\pi_0),p(\pi_1),\dots,p(\pi_k)\}$. Thus, $M_\omega$ can be effectively constructed from $\pi_0,\pi_1,\dots,\pi_k$ and $\omega$. Let $M_\omega^\ast$ be a Turing machine to implement Procedure~\ref{alg:rqa} based on $M_\omega$. 

Let $q$ be a function that maps every theory $\pi$ of $\Theta$ to $\enc{M_\pi^\ast}$. Clearly, $q$ is a recursive function, and we have 
$$\Theta(q(\omega))=\left\{
\begin{aligned}
&\Theta(\pi_0)&&\text{if }p(\omega)\not\in\{p(\pi_0),p(\pi_1),\dots,p(\pi_k)\};\\
&\textit{cl}(\emptyset)&&\text{otherwise}.
\end{aligned}
\right.$$

Let $M^q$ be a Turing machine that computes $q$. By Theorem~\ref{thm:recursion}, given $\enc{M^q}$ as input, we can effectively find a theory $\upsilon$ of $\Theta$ such that $\Theta(q(\upsilon))=\Theta(\upsilon)$. We thus have
$$\Theta(\upsilon)=\left\{
\begin{aligned}
&\Theta(\pi_0)&&\text{if }p(\upsilon)\not\in\{p(\pi_0),p(\pi_1),\dots,p(\pi_k)\};\\
&\textit{cl}(\emptyset)&&\text{otherwise}.
\end{aligned}
\right.$$

If $p(\upsilon)\not\in\{p(\pi_0),p(\pi_1),\dots,p(\pi_k)\}$, we let $\pi_{k+1}:=\upsilon$, and $\pi_{k+1}$ satisfies the aforementioned Conditions 1-2 in this case.

Otherwise, there must be some $i\in\{0,\dots,k\}$ such that $p(\pi_i)=p(\upsilon)$. As $p$ is a reduction from $\Theta$ to $\Gamma$, we thus have
$$\Theta(\pi_0)=\Theta(\pi_1)=\cdots=\Theta(\pi_k)=\Theta(\upsilon)=\textit{cl}(\emptyset).$$

Let $\omega$ be any theory of $\Theta$. We construct a Turing machine $N_\omega$ from $\omega$ to implement the following computation:
\begin{quote}
Given any $\enc{D,\phi}$ (where $D\in\mathscr{D}$ and $\phi\in\mathscr{Q}$) as input, first check whether $p(\omega)\in\{p(\pi_0),p(\pi_1),\dots,p(\pi_k)\}$. If this is true, then accept; otherwise, never stop.
\end{quote}
In addition, let $N_\omega^\ast$ be a Turing machine to implement Procedure~\ref{alg:rqa} based on $N_\omega$, and let $h$ be a function that maps each theory $\omega\in\textit{dom}(\Theta)$ to $\enc{N_\omega^\ast}$. It is also easy to see that $h$ is recursive, and
$$\Theta(h(\omega))=\left\{
\begin{aligned}
&\mathscr{D}\times\mathscr{Q}\!\!&&\text{if }p(\omega)\in\{p(\pi_0),p(\pi_1),\dots,p(\pi_k)\};\\
&\textit{cl}(\emptyset)&&\text{otherwise}.
\end{aligned}
\right.$$

Let $M^h$ denote a Turing machine that computes $h$. Applying Theorem~\ref{thm:recursion} again, one can effectively find a theory $\kappa$ of $\Theta$ such that $\Theta(h(\kappa))=\Theta(\kappa)$. We thus have 
$$\Theta(\kappa)=\left\{
\begin{aligned}
&\mathscr{D}\times\mathscr{Q}&&\text{if }p(\kappa)\in\{p(\pi_0),p(\pi_1),\dots,p(\pi_k)\};\\
&\textit{cl}(\emptyset)&&\text{otherwise}.
\end{aligned}
\right.$$

In this case, we claim that $p(\kappa)\not\in\{p(\pi_0),p(\pi_1),\dots,p(\pi_k)\}$. Otherwise, by the above equation, we have $\Theta(\kappa)=\mathscr{D}\times\mathscr{Q}$. On the other hand, there must some $\pi_i$ such that $p(\pi_i)=p(\kappa)$, which implies $\Theta(\kappa)=\textit{cl}(\emptyset)$. Since $\mathscr{Q}$ contains at least one non-tautological sentence, we know $\textit{cl}(\emptyset)\ne\mathscr{D}\times\mathscr{Q}$. A contradiction is thus obtained from the above conclusions. 

Now, let $\pi_{k+1}:=\kappa$. It is easy to see that $\pi_{k+1}$ satisfies Conditions 1-2 in this case, which completes the proof.
\end{proof}

\begin{lem}\label{lem:injective2bijective}
Suppose $p$ is an injective reduction from $\Gamma$ to $\Theta$, and $q$ an injective reduction from $\Theta$ to $\Gamma$. Then $\Gamma$ and $\Theta$ are recursively isomorphic.
\end{lem}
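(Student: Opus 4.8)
The plan is to carry out an effective version of the Cantor--Schr\"oder--Bernstein theorem, which is exactly the step used in the proof of Rogers's isomorphism theorem~\cite{Rogers58}. First I would fix effective enumerations $\pi_0,\pi_1,\dots$ of $\textit{dom}(\Gamma)$ and $\theta_0,\theta_1,\dots$ of $\textit{dom}(\Theta)$ --- these exist since both domains are recursive by Property~1 of Definition~\ref{defn:krf} --- and build the desired isomorphism $h$ as the union of an increasing chain $h_0\subseteq h_1\subseteq\cdots$ of \emph{finite} injective partial functions $h_s:\textit{dom}(\Gamma)\rightharpoonup\textit{dom}(\Theta)$, maintaining throughout the single invariant that $\Gamma(\pi)=\Theta(h_s(\pi))$ for every $\pi\in\textit{dom}(h_s)$. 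Starting from $h_0=\emptyset$, at stage $2n+1$ I would, if $\pi_n\notin\textit{dom}(h_{2n})$, perform a ``domain step'' that adds a pair $(\pi_n,\theta^\ast)$; at stage $2n+2$, if $\theta_n\notin\textit{ran}(h_{2n+1})$, perform a symmetric ``range step'' adding a pair $(\pi^\ast,\theta_n)$.

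For the \emph{domain step} covering a fresh $\pi^\ast$, I set $\theta_0:=p(\pi^\ast)$ and, as long as $\theta_i\in\textit{ran}(h_s)$, let $\pi_{i+1}:=h_s^{-1}(\theta_i)$ and $\theta_{i+1}:=p(\pi_{i+1})$; then I take $\theta^\ast$ to be the first $\theta_i$ lying outside $\textit{ran}(h_s)$. Using that $p$ is a reduction (so $\Gamma=\Theta\circ p$) together with the invariant on $h_s$, a short induction gives $\Theta(\theta_i)=\Gamma(\pi^\ast)$ for every $i$, so $h_s\cup\{(\pi^\ast,\theta^\ast)\}$ still satisfies the invariant, and it is injective because $\theta^\ast\notin\textit{ran}(h_s)$. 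The \emph{range step} is dual, using $q$ and $h_s^{-1}$ in place of $p$ and $h_s$: put $\pi_0:=q(\theta^\ast)$, iterate $\theta_{i+1}:=h_s(\pi_i)$ and $\pi_{i+1}:=q(\theta_{i+1})$ while $\pi_i\in\textit{dom}(h_s)$, and take $\pi^\ast$ to be the first $\pi_i$ outside $\textit{dom}(h_s)$; here $\Theta=\Gamma\circ q$ yields $\Gamma(\pi_i)=\Theta(\theta^\ast)$ for all $i$. Every stage is effective because $h_s$ is finite (so $\textit{dom}(h_s)$, $\textit{ran}(h_s)$ and $h_s^{-1}$ are computable) and $p,q$ are recursive; hence the construction, and thus $h$, is recursive.

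The crux --- and the step I expect to be the main obstacle --- is showing that each walk \emph{terminates}, i.e. that the $\theta_i$ (resp.\ the $\pi_i$) cannot remain forever inside the finite set $\textit{ran}(h_s)$ (resp.\ $\textit{dom}(h_s)$). I would argue that the $\theta_i$ are pairwise distinct: a collision $\theta_i=\theta_j$ with $i<j$ forces $\pi_i=\pi_j$ by injectivity of $p$, hence $\theta_{i-1}=\theta_{j-1}$ by injectivity of $h_s^{-1}$, and iterating pushes the collision down to $\theta_0=\theta_k=p(\pi_k)$ for some $k\ge 1$, whence $\pi^\ast=\pi_k=h_s^{-1}(\theta_{k-1})\in\textit{dom}(h_s)$, contradicting the choice of $\pi^\ast$; the analogous bound for the range step uses injectivity of $q$. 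Given termination, the limit $h=\bigcup_s h_s$ is a total injection with $\textit{dom}(h)=\textit{dom}(\Gamma)$ and $\textit{ran}(h)=\textit{dom}(\Theta)$ (each $\pi_n$ and each $\theta_n$ is covered by stage $2n+2$), hence a bijection; it is recursive; and $\Gamma=\Theta\circ h$ by the invariant. Therefore $\Gamma$ and $\Theta$ are recursively isomorphic, which is the assertion of the lemma.

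Finally, I would note why the naive route fails: one is tempted to classify each theory by tracing its orbit under the partial inverses $p^{-1},q^{-1},p^{-1},\dots$ and to use $p$ or $q^{-1}$ according to the orbit type, as in the classical Cantor--Schr\"oder--Bernstein proof, but this is not effective, since neither $\textit{ran}(p)$ nor $\textit{ran}(q)$ need be recursive and an orbit may be infinite. The incremental back-and-forth above is precisely what circumvents this difficulty, mirroring the treatment in~\cite{Rogers58}.
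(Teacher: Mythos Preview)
Your proposal is correct and follows essentially the same approach as the paper's proof: an effective back-and-forth construction in the style of Rogers, building an increasing chain of finite injective partial maps by alternately covering the next uncovered $\pi_n$ and $\theta_n$, each time chasing through $p$ and $h_s^{-1}$ (respectively $q$ and $h_s$) until the walk leaves the current finite range (respectively domain). Your collision argument for termination is in fact more explicit than the paper's, which simply observes that injectivity of $p$ and $h_n^{-1}$ together with $\pi_k\notin\textit{dom}(h_n)$ forces the walk to be a finite acyclic chain.
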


\begin{proof}
Let $\pi_0,\pi_1,\dots$ be an effective enumeration of all theories of $\Gamma$, and $\omega_0,\omega_1,\dots$ an effective enumeration of all theories of $\Theta$. Since both $\textit{dom}(\Gamma)$ and $\textit{dom}(\Theta)$ are recursive, such enumerations always exist.
We attempt to construct a sequence of mappings $h_i:\textit{dom}(\Gamma)\rightharpoonup \textit{dom}(\Theta)$, $i\in\mathbb{N}$, such that, for all $i\in\mathbb{N}$, the following conditions hold:
\begin{enumerate}
\item $h_i$ is injective;
\item $\Gamma(\pi)=\Theta(h_i(\pi))$ whenever $\pi\in\textit{dom}(h_i)$.
\end{enumerate} 

Let $h_0:=\emptyset$. Suppose $n\ge 0$. We define $h_{n+1}$ by cases as follows:
\begin{enumerate}
\item $n=2k$: If $\pi_k\in\textit{dom}(h_n)$, we just let $h_{n+1}:=h_{n}$. Otherwise, we check whether $p(\pi_k)\in\textit{ran}(h_{n})$. If no, let 
$$h_{n+1}:=h_{n}\cup\{\pi_k\mapsto p(\pi_k)\}.$$
Otherwise, let $S$ denote the following sequence:
$$\pi_k, p(\pi_k),h_n^{-1}(p(\pi_k)),p(h_n^{-1}(p(\pi_k))),\dots$$
where elements in the sequence are obtained by alternately applying $p$ and $h_n^{-1}$ until no new elements can be generated.

\medskip
We claim: 

\medskip
{\noindent\em Claim 1.} There is at least one theory $\omega$ of $\Gamma$ in $S$ such that $\omega\not\in\textit{ran}(h_n)$. 
\medskip

{\em Proof.} As both $p$ and $h^{-1}_n$ are injective, every theory in $S$ has at most one predecessor and at most one successor. In addition, since $\pi_k\not\in\textit{dom}(h_n)$, $S$ must be a chain. According to the construction of $h_n$, we know that $\textit{ran}(h_n)$ has only a finite number of elements, which implies that $S$ is a finite chain. Consequently, the last element of $S$ is a theory $\omega$ of $\Gamma$ on which $h^{-1}$ is undefined. (Otherwise, the chain can be extended by $h^{-1}$, a contradiction.) This then proves Claim 1. \hfill$\square$\!\!
\medskip

With Claim 1, we let $$h_{n+1}:=h_n\cup\{\pi_k\mapsto\omega\}.$$

\item $n=2k+1$: If $\kappa_k\in\textit{ran}(h_{n})$, we just let $h_{n+1}:=h_{n}$. Otherwise, we check whether $q(\kappa_k)\in\textit{dom}(h_{n})$. If no, let 
$$h_{n+1}:=h_{n}\cup\{q(\kappa_k)\mapsto\kappa_k\}.$$ 
Otherwise, let $T$ denote the following sequence:
$$\kappa_k, q(\kappa_k),h_n(q(\kappa_k)),q(h_n(q(\kappa_k))),\dots$$
where elements in the sequence are obtained by alternately applying $q$ and $h_n$ until no new elements can be generated.

\medskip
We claim:

\medskip
{\noindent\em Claim 2.} There is at least one theory $\omega$ of $\Theta$ in $S'$ such that $\omega\not\in\textit{dom}(h_n)$. 
\medskip

{\em Proof.} As both $q$ and $h_n$ are injective, every theory in $S'$ has at most one predecessor and at most one successor. In addition, since $\kappa_k\not\in\textit{ran}(h_n)$, $S'$ must be a chain. According to the construction of $h_n$, we know that $\textit{dom}(h_n)$ has only a finite number of elements, which implies that $S'$ is a finite chain. Consequently, the last element of $S'$ is a theory $\omega$ of $\Theta$ on which $h$ is undefined. (Otherwise, the chain can be extended by $h$, a contradiction.) This then proves Claim 2. \hfill$\square$\!\!
\medskip

With Claim 2, we let $$h_{n+1}:=h_n\cup\{\omega\mapsto\kappa_k\}.$$
\end{enumerate}

Clearly, in either case, $h_{n+1}$ satisfies Conditions 1-2. Let $h:=\bigcup_{n\ge 0}h_n$. It is easy to verify that $h$ is a bijective function from $\textit{dom}(\Gamma)$ to $\textit{ran}(\Theta)$ such that $\Gamma=\Theta\circ h$. Therefore, $\Gamma$ and $\Theta$ are recursively isomorphic.
\end{proof}

Now we are in the position to present a proof for Theorem~\ref{thm:main1}.

\begin{proof}[Proof of Theorem~\ref{thm:main1}]
Since recursive isomorphism is an equivalence relation, it suffices to prove that any arbitrary universal KRF $\Gamma$ is recursively isomorphic to the canonical KRF $\Theta$. To establish this, we first apply Lemma~\ref{lem:gamma2theta} to demonstrate an injective reduction from $\Gamma$ to $\Theta$. We then use Lemma~\ref{lem:theta2gamma} to show an injective reduction from $\Theta$ to $\Gamma$. By Lemma~\ref{lem:injective2bijective}, these two injective reductions allow us to construct a recursive isomorphism from $\Gamma$ and $\Theta$, thereby completing the proof.
\end{proof}

}

\end{document}